\theoremstyle{thmstyleone}%
\newtheorem{proposition}{Proposition}
\theoremstyle{thmstyletwo}%
\theoremstyle{thmstylethree}%
\newtheorem{lemma}{Lemma}
\DeclareMathOperator*{\argmin}{arg\,min}
\begin{document}

\title[Hybrid Algorithm Selection and Hyperparameter Tuning on Distributed Machine Learning Resources: A Hierarchical Agent-based Approach]{Hybrid Algorithm Selection and Hyperparameter Tuning on Distributed Machine Learning Resources: A Hierarchical Agent-based Approach}


\author*{\fnm{Ahmad} \sur{Esmaeili}*}\email{aesmaei@purdue.edu}

\author{\fnm{Julia T.} \sur{Rayz}}\email{jtaylor1@purdue.edu}

\author{\fnm{Eric T.} \sur{Matson}}\email{ematson@purdue.edu}

\affil{\orgdiv{Department of Computer and Information Technology}, \orgname{Purdue University}, \orgaddress{\street{401 N. Grant St.}, \city{West Lafayette}, \state{IN, 47907}, \country{USA}}}


\abstract{Algorithm selection and hyperparameter tuning are critical steps in both academic and applied machine learning. On the other hand, these steps are becoming ever increasingly delicate due to the extensive rise in the number, diversity, and distributedness of machine learning resources. Multi-agent systems, when applied to the design of machine learning platforms, bring about several distinctive characteristics such as scalability, flexibility, and robustness, just to name a few. This paper proposes a fully automatic and collaborative agent-based mechanism for selecting distributedly organized machine learning algorithms and simultaneously tuning their hyperparameters. Our method builds upon an existing agent-based hierarchical machine-learning platform and augments its query structure to support the aforementioned functionalities without being limited to specific learning, selection, and tuning mechanisms. We have conducted theoretical assessments, formal verification, and analytical study to demonstrate the correctness, resource utilization, and computational efficiency of our technique. According to the results, our solution is totally correct and exhibits linear time and space complexity in relation to the size of available resources. To provide concrete examples of how the proposed methodologies can effectively adapt and perform across a range of algorithmic options and datasets, we have also conducted a series of experiments using a system comprised of 24 algorithms and 9 datasets.}


\keywords{Multi-agent Systems, Distributed Machine Learning, Hyperparameter Tuning, Algorithm Selection}



\maketitle

\section{Introduction}

The last decades have witnessed a significant surge in the volume and diversity of the Machine Learning (ML) algorithms and datasets provided by multi-disciplinary research communities. Fueled by both the abundance of inexpensive yet powerful computational units and the increasing necessity of using ML-based solutions in every-day applications, such a fast-paced growth has introduced new challenges with respect to the organization, sharing, and selection of ML resources on one hand and automatically optimizing the generalization capability of the learning models on the other. These challenges have become even more compounded by the requirement for efficient parallel approaches to handle locally and geographically distributed ML algorithm and dataset portfolios \cite{esmaeili2020hamlet}.

The ML literature contains numerous research studies focusing on efficient and effective methods for algorithm selection and hyperparameter optimization. The model selection approaches reported in \cite{adankon2009model,banerjee2008model,brazdil2003ranking,chapelle2002model,zeng2017progressive,you2019towards} and the hyperparameter tuning methods proposed in \cite{bardenet2013collaborative,bengio2000gradient,bergstra2011algorithms,bergstra2012random,koch2018autotune,maclaurin2015gradient,parker2020provably}
are some noteworthy examples that address these problems separately. Additionally, there are studies that treat algorithm selection and hyperparameter optimization as a combined problem known as Combined Algorithm Selection and Hyperparameter (CASH) optimization problem. Coining the term ``CASH'', Thornton et al. reported the first prominent research in this area and contributed an automatic tool called Auto-WEKA \cite{thornton2013auto}. Utilizing the full range of classification algorithms provided by WEKA \cite{hall2009weka}, the proposed tool employs the Bayesian Optimization (BO) techniques such as Sequential Model-based Algorithm Configuration (SMAC) \cite{hutter2011sequential} and Tree-structured Parzen Estimator (TPE) \cite{bergstra2011algorithms} to automatically explore the search space comprising the algorithms and their hyperparameters to build models with minimal cross validation error. Hyperopt \cite{bergstra2015hyperopt}, developed on the Scikit-learn ML package \cite{scikit-learn}, extends Auo-WEKA's methodology by including preprocessing alongside algorithm selection and hyperparameter tuning in its BO-based approach.  Feurer et al. have further improved the previous solutions and developed an automatic machine learning (AutoML) tool called Auto-sklearn \cite{feurer2015efficient} that uses Bayesian optimization, warm started with meta-learning, together with automatic ensemble construction to leverage the classifiers found by the optimizer. In \cite{guo2019new}, the authors have explored three variants of the Mixed-integer Parallel Efficient Global Optimization (MiP-EGO) algorithm \cite{van2018automatic} for CASH. These methods involve creating surrogate models for each algorithm, including their hyperparameters, and selecting the best one using an internal optimization method. Another approach is presented in Automatic-AI \cite{czako2021automaticai}, where a modified version of Particle Swarm Optimization (PSO) is employed to explore the search space of algorithms and hyperparameters. To prevent being trapped in local, the proposed method incorporates simulated annealing within each particle during the optimization process. Considering a general definition for agents, the work reported in \cite{cheng2023hpn} uses a shared hyperparameter network to personalize the hyperparameters of the federated learning clients. Specifically, the training of this shared network is centralized and involves using a policy gradient method to solve an optimization problem and then sampling personalized hyperparameters for clients based in their encoding fed to the network.

Multi-Agent Systems (MAS), in its simplest form, refers to a group of autonomous and potentially self-interested intelligent entities that work towards a common goal using their communication and coordination capabilities. Leveraging MAS and agent-based technologies can bring scalability and distribution advantages to ML systems and enable the development of strategic and collaborative learning models \cite{ryzko2020modern,esmaeili2020hamlet}. Numerous research studies have addressed the applications of MAS in the design of ML and data mining solutions.  The works reported in \cite{kargupta1997scalable,kargupta1999collective,gorodetsky2003multi} are among the early efforts in agent-driven data mining tasks, and the studies conducted in \cite{albashiri2008emads,esmaeili2020hamlet} respectively propose general-purpose agent-based data mining and machine learning frameworks. With that said, most of the studies conducted by the community evolve around particular applications \cite{qasem21multi, javadpour2023dmaidps}, and as pointed out in \cite{grislin2022systematic}, the majority of agent abilities employed in agent-driven data mining pertains to goal oriented tasks such as knowledge processing and data collection \cite{chemchem2018deep,yakopcic2019high}. 
Additionally, there are a relatively limited number of studies that utilize an agent-based approach for machine learning algorithm tuning. For instance, in \cite{iranfar2021multi} a multi-agent reinforcement learning is used to optimize the hyperparameters of deep convolutional neural networks. Parker-Holder et al. introduced the Population-based Bandit (PB2) algorithm that utilizes a population of agents to optimize the hyperparameters of a reinforcement learning \cite{parker2020provably}. In \cite{xue2022multi}, Xue et al. have applied mulri-agent reinforcement learning by formulating the dynamic algorithm configuration (DAC) problem as a contextual Markov Decision Process (MDP) and employing the value-decomposition networks algorithm to learn the configuration policy. Finally, in a recent research~\cite{esmaeili2023agent}, an agent-based collaborative random search has been proposed that is based on the generic hierarchical MAS-based hyperparameter tuning model suggested in \cite{esmaeili2022hierarchical}. 

This paper introduces a hybrid agent-based ML algorithm selection and hyperparameter tuning, leveraging the distributed Hierarchical Agent-based Machine LEarning PlaTform (HAMLET) methodology \cite{esmaeili2020hamlet}. To align with the procedures provided in HAMLET, we have augmented its agent selection operators and the learning query structure. Our proposed query structure enables end users to specify both the desired characteristics of the algorithms and the arbitrary set of hyperparameters that should be optimized per each algorithm in a single query. Unlike the similar work we mentioned earlier, the proposed technique is not restricted to a specific category of ML tasks, and thanks to the flexible autonomy provided by its agent-based structure, it allows the integration of various hyperparameter tuning methods.

To better grasp the contribution of this paper, let us consider a scenario where we have a collection of diverse ML algorithms and datasets, developed and maintained across interconnected and potentially geographically distributed devices. To effectively utilize these resources, it is crucial to establish robust procedures to identify, locate, and consolidate them, which can be extensively rigorous depending on their size and distribution. HAMLET platform~\cite{esmaeili2020hamlet} not only facilitates the decentralized organization of ML resources but also offers distributed methods for automatically verifying and executing a batch of ML training and testing tasks using a flexible query structures. This paper extends the capabilities of the HAMLET platform by automating the process of hybrid algorithm selection and hyperparameter tuning on distributed resources, utilizing the same flexible query language provided by HAMLET. As an illustrative example, a user could employ a single query to ``tune the learning rate for all algorithms with a specific hyperparameter and use the optimized models, in combination with other kernel-based algorithms, to select the best model on a given dataset.'' Importantly, the user does not need to possess detailed knowledge about the available resources or their specific location in order to execute the query.

The subsequent sections of this paper are organized as follows: Section~\ref{sec:prelim} presents the preliminaries, offering and overview of the problem and the underlying agent-based structure utilized in our method. In Section~\ref{sec:method}, we delve into the details of the proposed hybrid algorithm selection and tuning method. Theoretical and formal discussions on the performance and correctness verification of our method are presented in  Section~\ref{sec:discussion}. Finally, Section~\ref{sec:conclusion} concludes the paper, summarizing the key findings, and offers suggestions for future work. 

\section{Preliminaries}\label{sec:prelim}
\subsection{Algorithm Selection and hyperparameter Tuning}
Algorithm selection involves determining the best learning model, in terms of generalization performance, from a set of learning algorithms trained on a given dataset. Let $\mathcal{A}$
denote the set of available algorithms and  and $\mathcal{X}$ represent the dataset split into disjoint training and validation sets.  The algorithm selection problem can be formally defined as finding the algorithm $A^*\in \mathcal{A}$ such that:
\begin{equation}\label{eq:AS}
	A^*\in\argmin_{A\in\mathcal{A}}\mathbb{E}_{x\sim\mathcal{G}_x}\left[\mathcal{L}\left(x; A(\mathcal{X}^{(\text{train})})\right)  \right] 
\end{equation}
where $\mathcal{G}_x$ is the grand truth distribution, $\mathcal{L}(x; \mathcal{M}_A)$ is the expected loss of the trained model of algorithm $A$ on independent and identically distributed (i.i.d.) samples $x$, and $\mathbb{E}_{x\sim\mathcal{G}_x}\left[\mathcal{L}\left(x; A(\mathcal{X}^{(\text{train})})\right) \right]$ is the generalization error. In practice, calculating $\mathbb{E}_{x\sim\mathcal{G}_x}$ is not feasible; hence, alternative model evaluation and estimation methods such as stability-based cluster validation \cite{ben2001stability} and cross-validation for classification, regression \cite{kohavi1995study,raschka2018model}, and even clustering \cite{tibshirani2005cluster} tasks are commonly used. Using $k$-fold cross validation technique \cite{kohavi1995study}, equation~\ref{eq:AS} can be rewritten as:
\begin{equation}\label{eq:ASCV}
	A^*\in\argmin_{A\in\mathcal{A}}\frac{1}{k}\sum_{i=1}^{k}\mathcal{L}\left(x\in \mathcal{X}_i^{(\text{valid})}; A(\mathcal{X}_i^{(\text{train})})\right)
\end{equation}
where $\mathcal{X}_i^{(\text{valid})}$ refers to the $i$th partition of the dataset $\mathcal{X}$ that is used for validation and $\mathcal{X}_i^{(\text{train})}=\mathcal{X}\backslash\mathcal{X}_i^{(\text{valid})}$.

The training behavior and the performance of each ML algorithm can be controlled by its hyperparameters. Learning rates and the number of hidden units in a neural network, and the kernel type and regularization penalty of a support vector machine are few examples of such hyperparameters. The objective of hyperparameter tuning/optimization is to select the values within the potentially inter-conditional domains of the hyperparameters such that the generalization error when applied to a specific dataset is minimized. Formally, let $\bm{\lambda}\in\bm{\Lambda}$ denote the hyperparameter vector of a given algorithm $A_{\bm{\lambda}}$. Using $k$-fold cross validation as an estimate for generalization error, the hyperparameter tuning problem can be defined conceptually similar to model selection as follows:
\begin{equation}\label{eq:HPTCV}
	\bm{\lambda}^*\in\argmin_{\bm{\lambda}\in\bm{\Lambda}}\frac{1}{k}\sum_{i=1}^{k}\mathcal{L}\left(x\in \mathcal{X}_i^{(\text{valid})}; A_{\bm{\lambda}}(\mathcal{X}_i^{(\text{train})})\right)
\end{equation}
where all notations are defined the same way as the ones in equation~\ref{eq:ASCV}.

\subsection{Hierarchical Agent-based Machine Learning Platform}
The Hierarchical Agent-based Machine Learning Platform proposed in \cite{esmaeili2020hamlet} adopts a holonic multi-agent systems \cite{fischer2003holonic} approach to organize and coordinate the ML resources in a distributed network based infrastructure. HAMLET serves as an open system that automates resource management, model training, and analysis. It treats each ML algorithm, dataset, and trained model as autonomous entities represented by a self-similar agent architecture. Figure~\ref{fig:hamlet} shows a high-level view of a HAMLET-based networked ML system.  

\begin{figure}
	\centering
	\includegraphics[width=\textwidth]{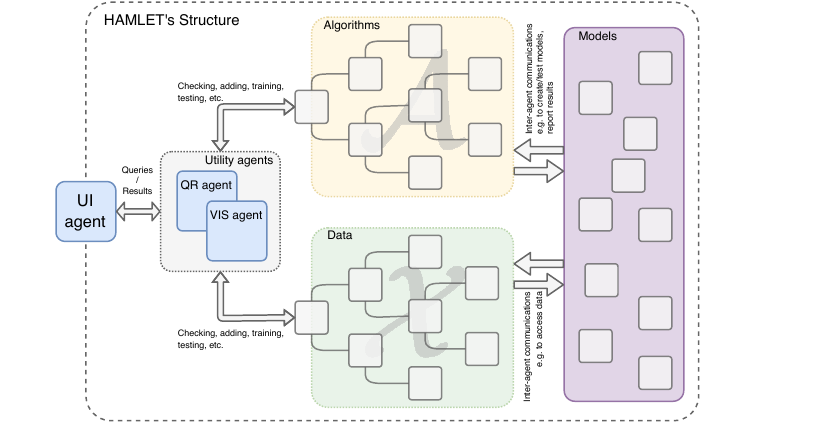}
	\caption{The high level view of a HAMLET-based networked ML system comprising user interface (UI), query/result processing (QR), visualization (VIS), as well as algorithms, data, and model units that are represented using atomic (terminal) and composite (non-terminal) agents.}
	\label{fig:hamlet}
\end{figure}

The distributed organization of ML resources in HAMLET, together with its automated training and testing procedures, rely on the capabilities and skills that are defined in terms of the configuration parameters associated with such resources. Strictly speaking, the configuration of each ML resource and the query specifying the details of an ML task are represented as a parametric set $\mathcal{P}=\{(p_i,v_i)\}_i$ and tuple $\left<\{(a,\mathcal{P}_a)\}_a, \{(x,\mathcal{P}_x)\}_x, \{(o,\mathcal{P}_o)\}_o\right>$ respectively, where $v_i$ is the value of configuration parameter $p_i$, and $\mathcal{P}_a$, $\mathcal{P}_x$, and $\mathcal{P}_o$ respectively denote the configuration parametric sets of algorithm $A_a\in\mathcal{A}$, dataset $X_x\in\mathcal{X}$, and output $o$. A HAMLET-based system begins with a minimal set of components, such as user interface, visualization, and query/result processor agents, and its multi-level structure is formed successively and in a distributed manner as new ML requests are sent to the system over time. HAMLET defines a series of relational and joining operators that are not only used in its builtin organization procedure but also employed by the agents to update their skills and drive their decision making processes in each ML task. To maintain conciseness, we present the definition of the related operations in the context of our proposed methodology in section \ref{sec:method}.  
\section{Methodology}\label{sec:method}
This section delves into the details of the proposed agent-based hybrid algorithm selection and hyperparameter tuning model. We start the presentation of our model by extending the query structure of HAMLET and redefining its corresponding query processing operations to support the intended tasks. Then, using the presented foundations, we provide the details of distributed procedures that are employed by agents to successfully fulfill requested operations.

\subsection{Query Structures}

As stated before, the machine learning queries in HAMLET are characterized by $\left<\{(a,\mathcal{P}_a)\}_a, \{(x,\mathcal{P}_x)\}_x, \{(o,\mathcal{P}_o)\}_o\right>$, where  $a$, $X$, and $o$ respectively denote the identifiers for the ML algorithm, data, and output configuration; and each $\mathcal{P}=\{(p_i,v_i)\}_i$ where $v_i\in \mathbb{D}_{p_i}$. Additionally, HAMLET introduces the new notation, ``$\ast$", as a general placeholder for all the available values for a specific parameter. For instance, an ML testing query specified by $\{(\ast,\{(kernel,rbf)\})\}$ means any ML algorithm with hyperparameter $kernel$ that is set to $rbf$ function, and similarly $\{(svc,\{(kernel,\ast)\})\}$ represents all available $svm$-based classifiers with any value in their $kernel$ hyperparameter. Despite its generic query structure formulations, however, HAMLET's training and testing algorithms work on a single algorithm, i.e., $A_a=(a,\mathcal{P}_a)$, at a time.

This paper not only extends the processing of queries to multiple algorithms in a single query, but also introduces a separate symbol, ``?", to distinguish the parameters that we intend to tune and redefines the available values that a parameter can take as $v_i\in\mathbb{D}_{p_i}\cup\{\ast,?\}$. For example, $\{(svc,\{(kernel,\ast), (C,?)\})\}$ specifies a tuning request for the hyperparameter $C$ in all the available $svm$-based classifiers with any value for their $kernel$, and similarly, $\{(svc,\{(kernel,?)\}),\\ (mlp,\{(lr,?)\})\}$ denotes tuning hyperparameter $kernel$  for all available $svm$-base classifiers and hyperparameter $lr$ (learning rate) for all available multi-layer perceptron algorithms. Please note these queries are more generic and might involve multiple section of the HAMLET structure, depending on its available ML resources. It should also be emphasized that tuning operation is processed only if there is at least one hyperparameter paired with "?" symbol in the query, otherwise, it is treated as a training and the corresponding operations are employed to respond to the query. For example, query with algorithm specification $\{(svc,\{(kernel,rbf)\}), (mlp,\{(lr,?)\})\}$ will only tune the MLP algorithms and returns the tuning result accompanied by the outcomes of training the svm-based classifier.

The selection operation works similar to HAMLET's testing operation in the sense that it locates all the matching resources in the structure and utilizing them to respond to the query. However, unlike testing, selection returns the specifications of the best algorithm based on some performance metrics. As for the selection operation, this paper continues to use HAMLET's existing operators and definitions, but allows for more flexibility in the inclusion of algorithms and incorporating tuning in selection. Before providing the detail procedures, table~\ref{tbl:hamComp} presents an example-based summarization of the extensions by comparing the meaning of a few queries in the existing version of HAMLET\cite{esmaeili2020hamlet} and this paper. Please note that, we use HAMLET's \emph{Testing} query for the sake of comparison, as it is much closer concept to algorithm selection operation, and the query column only include the specification of the algorithms to use. 

\begin{table}[htb]
	\centering
	\caption{Comparison of the meaning of \emph{Testing} query in HAMLET with the implemented selection query in this paper.}
	\label{tbl:hamComp}
	\renewcommand\theadfont{\bfseries}
	\begin{tabular}{p{0.19\linewidth} p{5pt} p{0.35\linewidth} p{5pt} p{0.37\linewidth}}
		\hline\hline
		\thead{Query} & &\thead{Meaning in \cite{esmaeili2020hamlet}\\as a testing query} && \thead{Meaning in this paper\\as a selection query}\\
		\hline
		\makecell[l]{\\$\{(\ast, \{(p_1, v_1)\})\}$} & &Test all algorithms that have hyperparameter $p_1=v_1$ & &Selecting the best among all algorithms that have hyperparameter $p_1=v_1$\\
		\makecell[l]{\\$\{(\ast, \{(p_1, v_1),$\\~~~~~~~$(p_2, v_2)\})\}$}&&Test all algorithms that have hyperparameter $p_1=v_1$ and $p_2=v_2$ & &Selecting the best among all algorithms that have hyperparameter $p_1=v_1$ and $p_2=v_2$ \\
		\makecell[l]{\\$\{(a_1, \{(p_1, v_1)\}),$ \\~~$(a_2, \{(p_2, v_2)\})\}$} && \makecell{\\\textbf{------}$^{\dagger}$} && Selecting the best among all algorithms $a_1$ with hyperparameter $p_1=v_1$ and $a_2$ with hyperparameter $p_2=v_2$\\
		\makecell[l]{\\$\{(a_1, \{(p_1, \ast)\}),$ \\~~$(a_2, \{(p_2, ?)\})\}$} && \makecell{\\\textbf{------}} && Selecting the best among all algorithms $a_1$ with any value for hyperparameter $p_1$ and all algorithms $a_2$ with tuned hyperparameter $p_2$\\
		
		\hline\hline
		\multicolumn{5}{l}{%
			\begin{minipage}{\textwidth}~\\
				\footnotesize{$^\dagger$ HAMLET handles multi-algorithm queries at the query level. That is, the original query is broken down first and then, each travels down the hierarchy and processed by the agents separately.}
			\end{minipage}
		}	
	\end{tabular}
	
\end{table}

Although based on equations~ \ref{eq:ASCV} and \ref{eq:HPTCV}, hyperparameter tuning and algorithm selection share similar processes, this paper treats them in separate but overlapping query types. This is mainly due to the difference in the way that they should be responded; tuning queries usually require a detailed report on candidate hyperparameter values, especially in multi-algorithm hyperparameter optimization, whereas the selection aims at finding a single algorithm matching the criteria.

\subsection{Tuning and Selection}

Similar to HAMLET's training and testing operations, tuning and/or selection comprises two top-down interaction flows in the hierarchy; the first pass determines the access information about the ML resources and handles errors prematurely, and the second pass initiates the corresponding operation. The inter-agent interactions in both algorithm and data hierarchies during the first pass are based on the CNP protocol to locate the data and the most appropriate position to perform tuning and/or selection. To quantize their competence for a query and prepare proposals in the CNP, the agents use a bivariate operator, called \emph{Parametric Similarity Ratio}(PSR) and denoted by $\accentset{\bm{\ast}}{\bm{\sim}}$. To cover the new ``?'' tuning symbol while breaking HAMLET's core processes, this paper redefines PSR as follows: 

\begin{equation}\label{eq:paramsimset}
	\accentset{\ast}{\sim}(P, P^\prime)=\frac{\displaystyle \sum_{\substack{v=v^\prime\\ v=?\\v^\prime=?}}\accentset{\ast}{\sim}((p,v), (p,v^\prime)) + \prod_{v\ne v^\prime}\accentset{\ast}{\sim}((p,v), (p,v^\prime))}{\vert P\vert} 
\end{equation}
where
\begin{equation}\label{eq:paramsimpair}
	\accentset{\ast}{\sim}((p,v), (p,v^\prime))=\begin{cases}
		1 & \text{if } v=v'\\
		\tau & \text{if }  v=?\lor v'=?\\
		\alpha & \text{if } v=\ast\oplus v'=\ast\\
		\beta & \text{otherwise}\\
	\end{cases}
\end{equation}
where $0<\beta<\alpha<\tau<1$; $(p,v)\in P; (p,v^\prime)\in P^\prime$; 
and finally, $\vert\dots\vert$ represents set cardinality.
Moreover, we redefine HAMLET's parametric relational operator $\accentset{\ast}{\leq}$ to support the new tuning symbol ``?'' in coverage checking process. In this redefinition, we have:
\begin{equation}\label{eq:paramIn}
	(p,v)\accentset{\ast}{\leq}(p,v')\iff v\in\{v',\ast,?\} \lor v'\in\{\ast,?\}
\end{equation}
and
\begin{equation}\label{eq:paramSetIn}
	P\accentset{\ast}{\leq}P'\iff \forall (p, v)\in P \exists (p, v')\in P\; \colon\; (p,v)\accentset{\ast}{\leq}(p,v')
\end{equation}
It should be emphasized that all of the redefined operators are compliant with the original version of HAMLET. 

As stated before, the approaches suggested in this paper support multi-algorithm tuning and selection queries. Let $\zeta_{a_i}^g\in\bm{\zeta}^g$ denote the competence quantity of agent $g\in G$ with capability/skill parametric set $P^g$ that is to process algorithm $A_{a_i}$ represented by $(a_i,P_{a_i})$. For a multi-algorithm query with algorithm specifications $\{(a_i,P_{a_i})\}_i$, we define:
\begin{equation}\label{eq:compet1}
	\bm{\zeta}^g=\bigcup_{a_i}\left\{\zeta_{a_i}^g \mid \zeta_{a_i}^g=\accentset{\ast}{\sim}(P_{a_i}, P^g)\right\} 
\end{equation}

Similar to the existing training and testing operations, the proposed tuning and selection procedures also comprise two passes in the depth of the hierarchy: during the first pass we assure that we determine the resources and their addresses to perform the operation, and throughout the second pass the structural changes are applied and the operation is initiated. Algorithm~\ref{alg:processtune} provides the steps for the first pass of the tuning process. In abstract terms, during the travel of query specifications down the hierarchy each agent uses $\accentset{\ast}{\leq}$ operator to determine whether there is a hope for finding a match algorithmic sub-structure in lower levels and forwards an appropriate portion of the query downwards only in the existence of such resources. During the upward travel of the responses from the subordinate agents, on the other hand, each agent selects the best candidate for the query. The candidates characterize the locations where the new tuner structures should be inserted. The intuition for determining the insertion locations is to provide the tuners with the possibility to reuse the existing information in the HAMLET to initiate the optimization process. As shown in line~\ref{ln:processtune_filter} of the algorithm, filtering out the impossible sub-queries is based on both the name of each algorithm and its parametric specifications. On the other hand, lines~\ref{ln:processtune_candids_start} to \ref{ln:processtune_candids_end} of algorithm~\ref{alg:processtune}, details the process of choosing the candidate. According to the process, the current agent running this algorithm is by default the candidate unless there is a unique maximum proposal, checked using the boolean set $\mathcal{F}^{id}$ in line~\ref{ln:processtune_flags}, from its children.  In case it finally selects itself as the candidate, its proposal will be equal to the maximum proposal that it ever received for that specific query. The names chosen in the algorithm are self-explanatory, and the accompanied comments provide extra explanations whenever needed. 

\begin{algorithm}
	\DontPrintSemicolon
	\SetAlgoLined
	\KwIn{$Q^{id}_A=\{(a_i, P_{a_i})\}_{a_i}$\Comment*[f]{the query's algorithm specifications}}
	$\mathcal{B}^{id}\gets\left\{b^{id}_{a_i}=\{me\} \mid (a_i, P_{a_i})\in Q^{id}_A\right\}$\Comment*[f]{{the best candidates}}\label{ln:processtune_initbests}\;
	$\mathcal{R}^{id}\gets \left\{r^{id}_{a_i}=0 \mid (a_i, P_{a_i})\in Q^{id}_A\right\}$\Comment*[f]{{the best proposals}}\;
	$\mathcal{Q}^{id}\gets\emptyset$\Comment*[f]{{possible queries}}\;
	\ForEach(\Comment*[f]{{filtering out impossible queries}}){$(a_i, P_{a_i})\in Q^{id}_A$}{
		\uIf(\Comment*[f]{\emph{Cap} refers to agent's capability set}){$\left((name,\textsf{Name})\boxed{\accentset{\ast}{\leq}}(name,a_i)\;\; \textbf{and}\;\; P_{a_i}\boxed{\accentset{\ast}{\leq}}\textsf{Cap}\right)\;\textbf{or}\; \textsf{Level}=1$}{\label{ln:processtune_filter}
			$\mathcal{Q}^{id}\gets \mathcal{Q}^{id}\bigcup (a_i, P_{a_i})$
		}\Else{
			$r^{id}_{a_i}\gets-1; b_{a_i}^{id}\gets\emptyset$\;
		}
	}
	\uIf(\Comment*[f]{{if no sub-queries are possible}}){$\mathcal{Q}^{id}=\emptyset$}{
		$\textsf{Propose}(\textsf{Parent}, Q^{id}_A, \mathcal{R}^{id})$\;	
	}\Else{
		\uIf(\Comment*[f]{\parbox[t]{.5\textwidth}{a terminal agent in the hierarchy}}){$\textsf{Children}=\emptyset$}{
			$\mathcal{R}^{id}\gets\left\{r^{id}_{a_i}=\zeta_{a_i}^{me}\mid\zeta_{a_i}^{me}=\boxed{\accentset{\ast}{\sim}}(P_{a_i}, \textsf{Cap}),\forall(a_i, P_{a_i})\in \mathcal{Q}^{id}\right\} $\;
			$\textsf{Propose}(\textsf{Parent}, \mathcal{Q}^{id}, \mathcal{R}^{id})$\Comment*[f]{{sends up its calculated proposals}}\;
		}\Else(\Comment*[f]{\parbox[t]{.6\textwidth}{a non-terminal agent in the hierarchy}}){
			$\mathcal{F}^{id}\gets \{f_{a_i}=false\mid \forall(a_i, P_{a_i})\in \mathcal{Q}^{id}\}$\Comment*[f]{to control uniqness}\label{ln:processtune_flags}\; 
			\ForEach{$c \in \textsf{Children}$}{
				$\mathcal{R}_c\gets\textsf{CFP}(c, \mathcal{Q}^{id})$\;
				\ForEach{$r_{a_i}\in\mathcal{R}_c$}{
					\uIf{$r_{a_i} > r^{id}_{a_i}$}{\label{ln:processtune_candids_start}
						$b^{id}_{a_i} \gets \{c\} ;$\quad
						$r^{id}_{a_i}\gets r_{a_i}$\;
					}\ElseIf{$r_{a_i} = r^{id}_{a_i}$}{
						\uIf{$f_{a_i}$}{
							$b^{id}_{a_i} \gets \{me\}$\;
						}\Else{
							$b^{id}_{a_i} \gets \{c\} ;$\quad
							$f_{a_i}\gets true$\;
						}\label{ln:processtune_candids_end}
					}
				}
			}
			\uIf(\Comment*[f]{\parbox[t]{.5\textwidth}{this is the ALG agent}}){$\textsf{Level = 1}$}{
				$\textsf{Inform}(\textsf{Parent}, Q_A^{id}, \mathcal{R}^{id})$\;
			}\Else{
				$\textsf{Propose}(\textsf{Parent}, Q_A^{id}, \mathcal{R}^{id})$\;
			}
		}
	}
	\caption{The steps for filtering out unmatched algorithm specifications ad determining the appropriate tuning location in the hierarchy during the first pass.}
	\label{alg:processtune}
\end{algorithm}

The second pass of the tuning process is more straightforward, as it only involves the candidates chosen during the first pass. The detailed steps are presented in algorithm~\ref{alg:initiatetune}. The unique query id in the \emph{Tune} algorithm is used by the agents as the key to extract internal information about the original query, and $\mathcal{X}_{\emph{info}}$ holds the access information to the validation dataset to be used during the hyperparameter optimization. It should be emphasized that this process is following the agent-based organization of ML resources in HAMLET, and the required dataset information is collected during the first pass and using the exact mechanism provided in \cite{esmaeili2020hamlet}. Tuning can benefit from any useful information from the other subordinate agents. To support that, the chosen candidate uses lines~\real{ln:initiatetune-h} and \ref{ln:initiatetune-s} to collect suggestions from its children regarding the values for ``*'' and ``?'' hyperparameters. Then, it integrates them in line~\ref{ln:initiatetune-i} and uses the result in the tuning process. We do not enforce any specific suggestions or integration mechanism in our extension to HAMLET. Agents can, for instance, implement diverse and incentive-based learning methods based on the history of ML tasks and interactions with the subordinates. The extra information obtained by integrating the suggestions can be used, for instance, as the initial points in the hyperparameter search space $\bm{\lambda}\in\bm{\Lambda}$ for methods such as Bayesian Optimization. After the tuning process, the results are collected and reported upwards in the hierarchy until it reaches HAMLET's root and then reported to the end user. The real hyperparameter optimization is initiated through the \emph{LaunchTuning} method in line~\ref{ln:initiatetune-launch} of the algorithm. In its core, this function creates new member agent(s) that implement equation~\ref{eq:HPTCV} using either central or agent-based methods such as \cite{esmaeili2022hierarchical}. Figure~\ref{fig:tune-example} provides a concrete example for the two passes of the tuning process. In this example, it is assumed that  $\beta=0.1; \alpha=0.6; \tau=0.8$. In this example, two bold green arrows represent the communication channel between the tuners and the provided datasets during the optimization process.

\begin{algorithm}[H]
	\DontPrintSemicolon
	\label{alg:initiatetune}
	\SetAlgoLined
	\KwIn{\emph{query\_id}; $\mathcal{X}_{\emph{info}}$\Comment*[f]{query and dataset info}}
	$\mathcal{T}^{id}\gets\emptyset$\Comment*[f]{tuning results}\;
	\ForEach{$(a_i, P_{a_i})\in \mathcal{Q}^{id}$}{
		\uIf(\Comment*[f]{agent is previously chosen for tuning $a_i$}){$b_{a_i}^{id}=me$}{
			$\mathcal{H}^{id}\gets\left\{p_i\mid (p_i,v_i)\in P_{a_i}, v_i\in\{*,?\}\right\}$\label{ln:initiatetune-h}\;
			$\mathcal{U}^{id}\gets\textsf{Ask}\left(\forall c\in\textsf{Children}, \textsf{Suggest},\mathcal{H}^{id},\mathcal{X}_{\emph{info}}\right)$\label{ln:initiatetune-s}\;
			$\mathcal{I}^{id}\gets\textsf{Integrate}(\mathcal{U}^{id})$\label{ln:initiatetune-i}\;
			$\mathcal{T}^{id}\gets \mathcal{T}^{id}\bigcup\textsf{LaunchTuning}\left((a_i, P_{a_i}), \mathcal{I}^{id},  \mathcal{X}_{\emph{info}}\right)$\label{ln:initiatetune-launch}\;
		}\Else{
			$\mathcal{T}^{id}\gets \mathcal{T}^{id}\bigcup\textsf{Ask}\left(b_{a_i}^{id}, \textsf{Tune}, (a_i, P_{a_i}), \mathcal{X}_{\emph{info}}\right)$\;
		}
	}
	$\textsf{Inform}(\textsf{Parent}, \mathcal{T}^{id})$\;
	\caption{The \emph{Tune} algorithm in the second pass that allocates the required resources and starts tuning the hyperparameters based on the collected information from the first pass.}
\end{algorithm}

\begin{figure}
	\centering
	\begin{subfigure}[b]{0.8\textwidth}
		\centering
		\includegraphics[width=\textwidth]{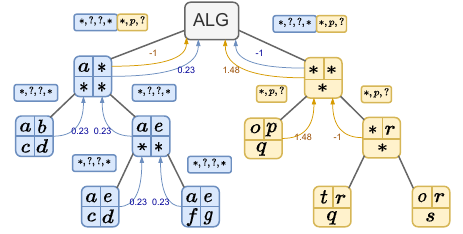}
		\caption{first pass}
		\label{fig:tune-example-fp}
	\end{subfigure}
	\begin{subfigure}[b]{0.8\textwidth}
		\centering
		\includegraphics[width=\textwidth]{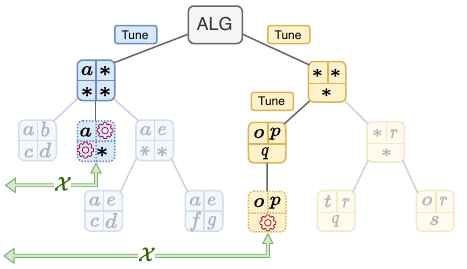}
		\caption{second pass}
		\label{fig:tune-example-sp}
	\end{subfigure}
	\caption{An abstract example that depicts two passes of the proposed tuning algorithm. The used color codes distinguish different algorithm types and the corresponding multi-algorithm tuning query. The shown proposal values computed by the terminal agents are based on the following constants: $\beta=0.1; \alpha=0.6; \tau=0.8$}.
	\label{fig:tune-example}
\end{figure}

As stated before, our contribution allows simultaneous algorithm selection and hyperparameter tuning in its query structure. Combining algorithm selection (equations~\ref{eq:ASCV}) with hyperparameter tuning (equation~\ref{eq:HPTCV}) can be written as:
\begin{equation}\label{eq:CASH}
	A^*_{\bm{\lambda}^*}\in\argmin_{A\in\mathcal{A},\bm{\lambda}\in\bm{\Lambda}}\frac{1}{k}\sum_{i=1}^{k}\mathcal{L}\left(x\in \mathcal{X}_i^{(\text{valid})}; A_{\bm{\lambda}}(\mathcal{X}_i^{(\text{train})})\right)
\end{equation}

This process can be translated into HAMLET's architecture using very similar steps that we had for tuning, i.e., by locating the resources through the first pass and then initiating the validation process and selecting the best algorithms in the second pass. For the sake of brevity, Algorithm~\ref{alg:processselection} only presents the lines that need to be changed in algorithm~\ref{alg:processtune} to support the simultaneous algorithm selection and tuning during the first pass. The changes basically make sure the non-tuning sub-queries are properly reported upwards and the agents candidacies are set accordingly. The additional checks in this algorithm pertain to the intuition that selection among a set of algorithms needs each of such algorithms to already exist or can be built during a tuning process. Moreover, they assure that while the selection covers all the matched capabilities, it stays as close as possible to the terminal agents. Figure~\ref{fig:selection-cases} provides some examples to demonstrate the behavior of the algorithms for various queries. For the sake of simplicity and clarity, we have used a single-algorithm settings for the query.

\begin{algorithm}
	\DontPrintSemicolon
	\SetAlgoLined
	\setcounter{AlgoLine}{18}
	\KwIn{$Q^{id}_A=\{(a_i, P_{a_i})\}_{a_i}$\Comment*[f]{the query's algorithm specifications}}
	\ForEach{$c \in \textsf{Children}$}{\label{ln:processselection-begining}
		$\mathcal{R}_c\gets\textsf{CFP}(c, \mathcal{Q}^{id})$\;
		\ForEach{$r_{a_i}\in\mathcal{R}_c$}{
			$z_{a_i}^{id}\gets \exists (p_i,?)\in P_{a_i}$\label{ln:processselection-ist}\Comment*[f]{determines if a sub-query is tuning}\;
			$l_{a_i}^{id}\gets \exists (p_i,\ast)\in P_{a_i}$\label{ln:processselection-isg}\Comment*[f]{determines if a sub-query is generic}\;
			\uIf{$r_{a_i} > r^{id}_{a_i}$}{
				$b^{id}_{a_i} \gets \{c\} ;$\quad
				$r^{id}_{a_i}\gets r_{a_i}$\;
			}\ElseIf{$r_{a_i} = r^{id}_{a_i}$}{\label{ln:processselection-eq}
				\uIf{$z_{a_i}^{id}\; \textbf{and}\; f_{a_i}$}{
					$b^{id}_{a_i} \gets \{me\}$\;
				}\uElseIf{$z_{a_i}^{id}$}{
					$b^{id}_{a_i} \gets \{c\} ;$\quad
					$f_{a_i}\gets True$\;
				}\uElseIf{$r_{a_i} > 0$}{\label{ln:processselection-positive}
					$b^{id}_{a_i} \gets b^{id}_{a_i} \cup \{c\}$\label{ln:processselection-union}\;
				}\uElseIf{$f_{a_i}\; \textbf{and}\; l_{a_i}^{id}=false$}{
					$b^{id}_{a_i} \gets \{me\}$\;
				}\ElseIf{$l_{a_i}^{id}=false$}{
					$b^{id}_{a_i} \gets \{c\} ;$\quad
					$f_{a_i}\gets True$\;
				}
			}
		}
	}
	\ForEach{$(a_i, P_{a_i})\in \mathcal{Q}^{id}$}{
		\If{$l_{a_i}^{id}\; \textbf{and} z_{a_i}^{id}=false\; \textbf{and}\; r^{id}_{a_i}=0$}{
			$b^{id}_{a_i} \gets \emptyset;$\quad
			$r^{id}_{a_i}\gets -1$\;\label{ln:processselection-end}
		}
	}
	\caption{The changes in the first pass of algorithm~\ref{alg:processtune} to support both selection and tuning queries.}
 \label{alg:processselection}
\end{algorithm}

\begin{figure}
	\centering
	\begin{subfigure}[b]{0.3\textwidth}
		\centering
		\includegraphics[width=\textwidth]{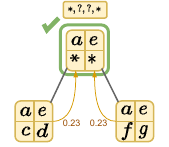}
		\caption{~}
		\label{fig:sca}
	\end{subfigure}
	\begin{subfigure}[b]{0.3\textwidth}
		\centering
		\includegraphics[width=\textwidth]{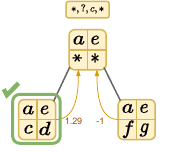}
		\caption{~}
		\label{fig:scb}
	\end{subfigure}
	\begin{subfigure}[b]{0.3\textwidth}
		\centering
		\includegraphics[width=\textwidth]{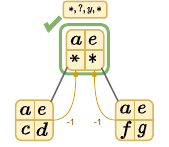}
		\caption{~}
		\label{fig:scc}
	\end{subfigure}
	\begin{subfigure}[b]{0.3\textwidth}
		\centering
		\includegraphics[width=\textwidth]{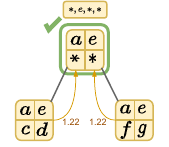}
		\caption{~}
		\label{fig:scd}
	\end{subfigure}
	\begin{subfigure}[b]{0.3\textwidth}
		\centering
		\includegraphics[width=\textwidth]{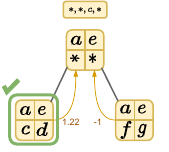}
		\caption{~}
		\label{fig:sce}
	\end{subfigure}
	\begin{subfigure}[b]{0.3\textwidth}
		\centering
		\includegraphics[width=\textwidth]{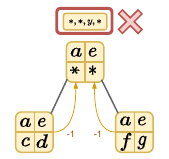}
		\caption{~}
		\label{fig:scf}
	\end{subfigure}
	\caption{Examples demonstrating the behavior of algorithm~\ref{alg:processselection} for various query types. Figures~\ref{fig:sca}, \ref{fig:scb}, and \ref{fig:scc} are for tuning cases, and figures~\ref{fig:scd}, \ref{fig:sce}, and \ref{fig:scf} are for selection cases. In successful passes, the selected candidates are marked in green color, and the shown proposal values are based on the following constants: $\beta=0.1; \alpha=0.6; \tau=0.8$.}
	\label{fig:selection-cases}
\end{figure}

The second pass of the combined selection process behaves similar to the second pass of the tuning process and is presented in algorithm~\ref{alg:selectionsecond}. The \emph{Validate} request in line~\ref{ln:selectionsecond-valid} technically asks the agent to train and validate the algorithm using methods like cross-validation. To help with the understanding the passes of combined selection and tuning, a visual example is provided in figure~\ref{fig:select-example}. To highlight distinctions, the HAMLET structure in this example is kept the same as in our previous example in figure~\ref{fig:tune-example}. In the depicted selection example, we have added a new non-tuning sub-query, and the second pass employs hypothetical loss values to show the flow of reports upwards in the hierarchy. Finally, $w$ value for the third hyperparameter of the second algorithm (represented by yellow color) is assumed to be the tuned value.

\begin{algorithm}[H]
	\DontPrintSemicolon
	\label{alg:selectionsecond}
	\SetAlgoLined
	\KwIn{\emph{query\_id}; $\mathcal{X}_{\emph{info}}$\Comment*[f]{query and dataset info}}
	$\mathcal{S}^{id}\gets\emptyset$\Comment*[f]{selection results: $\mathcal{S}^{id}=\{s_{a_i}^{id}=(\bm{\lambda},\mathcal{L})\}$}\;
	\ForEach{$(a_i, P_{a_i})\in \mathcal{Q}^{id}$}{
		\uIf(\Comment*[f]{agent chosen during the first pass}){$b_{a_i}^{id}=me$}{\label{ln:selectionsecond-base}
			\uIf{$z_{a_i}^{id}$}{
				$\mathcal{H}^{id}\gets\left\{p_i\mid (p_i,v_i)\in P_{a_i}, v_i\in\{*,?\}\right\}$\label{ln:initiatetune-h}\;
				$\mathcal{U}^{id}\gets\textsf{Ask}\left(\forall c\in\textsf{Children}, \textsf{Suggest},\mathcal{H}^{id},\mathcal{X}_{\emph{info}}\right)$\label{ln:selectionsecond-s}\;
				$\mathcal{I}^{id}\gets\textsf{Integrate}(\mathcal{U}^{id})$\label{ln:selectionsecond-i}\;
				$\mathcal{S}^{id}\gets \mathcal{S}^{id}\bigcup\textsf{LaunchTuning}\left((a_i, P_{a_i}), \mathcal{I}^{id},  \mathcal{X}_{\emph{info}}\right)$\label{ln:selectionsecond-launchT}\;
			}
			\Else{
				$\mathcal{S}^{id}\gets\mathcal{S}^{id}\bigcup\textsf{Ask}\left(\forall b\in b_{a_i}^{id}, \textsf{Validate},\mathcal{X}_{\emph{info}}\right)$\label{ln:selectionsecond-valid}\;
			}
		}\ElseIf{$b_{a_i}^{id}\ne\emptyset$}{
			$\mathcal{S}^{id}\gets \mathcal{S}^{id}\bigcup\textsf{Ask}\left(\forall b\in b_{a_i}^{id}, \textsf{Select}, (a_i, P_{a_i}), \mathcal{X}_{\emph{info}}\right)$\label{ln:selectionsecond-select}\;
		}
	}
	\If{$\mathcal{S}^{id}\ne\emptyset$}{
		$\textsf{Inform}(\textsf{Parent}, \argmin\limits_{a_i}\{s_{a_i}^{id}[\mathcal{L}]\})$\;
	}
	\caption{The \emph{Select} algorithm in the second pass that allocating the required resources starts tuning and validations based on the collected information from the first pass.}
\end{algorithm}

\begin{figure}
	\centering
	\begin{subfigure}[b]{0.8\textwidth}
		\centering
		\includegraphics[width=\textwidth]{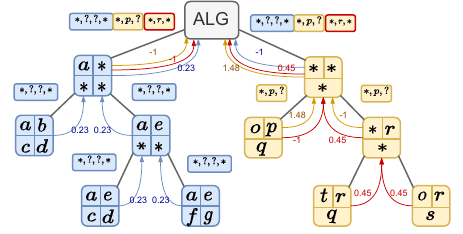}
		\caption{first pass}
		\label{fig:select-example-fp}
	\end{subfigure}
	\begin{subfigure}[b]{0.8\textwidth}
		\centering
		\includegraphics[width=\textwidth]{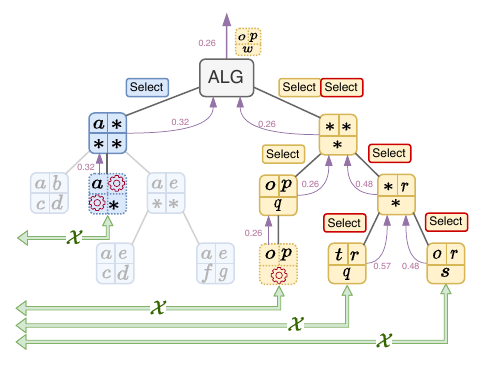}
		\caption{second pass}
		\label{fig:select-example-sp}
	\end{subfigure}
	\caption{An abstract example that depicts two passes of the proposed combined algorithm selection and tuning in HAMLET. The used color codes distinguish different algorithm types and the corresponding multi-algorithm tuning query. The shown proposal values computed by the terminal agents are based on the following constants: $\beta=0.1; \alpha=0.6; \tau=0.8$, and the loss values reported in the second pass are hypothetical.}
	\label{fig:select-example}
\end{figure}

Please note that the presented separate algorithms for each pass of tuning and selection processes are only for clarity purposes and they are merged into a single algorithm for each pass in practice. Furthermore, we assume that the created structures for tuning tasks are ephemeral and shall be released as soon as the process completes. To solidify the results, one can easily follow the resource addition process in \cite{esmaeili2020hamlet} from the tuners parent agent. As the tuned algorithm has already been trained, this process will only be one vertical partial pass in the hierarchy.

\section{Discussions}\label{sec:discussion}
In this section, we delve into the performance details of the proposed methodologies and thoroughly examine their correctness. Formal analysis and verification have been employed for two key reasons: 1) our contribution is not limited to any particular algorithm selection or hyperparameter optimization approach, and 2) relying solely on empirical methods to report and compare results would not adequately evaluate the proposed methodologies or ensure their correctness. Furthermore, this section showcases how the proposed methods are seamlessly integrated into the HAMLET system and demonstrate the expected behavior.

\subsection{Performance}\label{sec:perf_disc}
The contribution of this paper is about the way that agents representing algorithmic and data resources in the HAMLET platform can collaborate to handle a simultaneous algorithm selection and tuning task. 

Let $\mathcal{O}(z)$ and $\mathcal{O}(s)$ denote the time/space complexities of running a tuning method, such as random search, BO, etc., and the selection procedure, including training and validation, respectively. In a structure with $|G|$ number of agents, the worst case scenario happens when all of the terminal agents match a query. Based on the fact that agents run in parallel, the worst condition also requires a maximum possible height for the structure. On the other hand, from the formal discussions provided in \cite{esmaeili2020hamlet}, we know that each non-terminal agent in the structure must have at least two subordinates. Therefore, in the worst case scenario, each level of the HAMLET's structure, except for the root and the last layer, has one terminal and one non-terminal agent, totaling the number of non-terminal agents equal to the height of the structure, $h$. It can be easily shown that $h=\frac{|G|-1}{2}$ and similarly, the total number of terminal agents equals to $\frac{|G|+1}{2}$.

During the first pass of the proposed method, the only conducted operation would be calculating proposal values and comparing the results to each other. Therefore, due to the concurrency of the agents, the time complexity would be $\mathcal{O}(h)=\mathcal{O}(|G|)$. During the second pass, the query flows through the established candidates found in the first pass to reach the terminal agents where the actual ML model building and validation happens. Since, building and validating the models at the terminal agents are conducted in parallel, the time complexity of the second pass would be $\mathcal{O}(|G|+\max_{g}(z_g))$, where $\mathcal{O}(z_g)$ denotes the time complexity of selection/tuning operation at the terminal agent $g$. Since almost all ML operations consume more time than inter-agent communications and basic internal calculations, i.e., $\mathcal{O}(|G|)\ll\mathcal{O}(\max_{g}(z_g))$, it would be safe to conclude that the time complexity of the proposed methodology is $\mathcal{O}(\max_{g}(z_g))$. In other words, the extra time consumed by the platform to handle the query is negligible.

The space complexity of the proposed method depends on the size and characteristics of the data structures employed by each agent internally. There are two key factors: 1) the size of the query that system receives, which is fixed, and 2) the number of subordinates that each agent possesses, which has a reverse relationship with the height of hierarchy, i.e., the deeper the structure, the smaller the average number of subordinates. Taking the space required by all agents into account, the second factor would require $\mathcal{O}(|G|)$ memory. Assuming the space complexity of the selection/tuning operation executed by terminal agent $g$ be $\mathcal{O}(z_g)$, the overall space complexity would be $\mathcal{O}(|G|\times z_g)$.  

Indeed, the worst case that we assumed above will only happen if the entire algorithmic sub-structure of HAMLET is homogeneous, that is, each agent in it represents the same ML algorithm with different hyperparameter values. Since HAMLET is designed to accommodate a diverse set of ML algorithms and datasets, allowing for a wide range of applications and resource configurations, the worst case scenario is very unlikely to happen in realistic scenarios. Moreover, it should be emphasized that the large portion of the required computational and storage power stem from the selection/tuning processes temselves. These tasks inherently involve extensive computation and storage resources due to the nature of exploring different algorithmic configurations and evaluating their performance on large datasets; thus the similar power would be needed even without using HAMLET's platform.

\subsection{Correctness}\label{sec:cor_disc}
As theoretically proved in \cite{esmaeili2020hamlet}, the core functionalities that HAMLET's architecture provides are correct. With that said, we investigate the correctness of our contributions in two main directions: 1) ensuring that the new additions do not disrupt the theoretically proven correctness of HAMLET's functionalities, and 2) verifying the correctness of both tuning and selection mechanisms suggested in the previous sections. This entails examining the proposed methods and providing propositions , lemmas, and theories to support the proofs. Please note that, without any loss of generality, all claims and proofs assume that the algorithms are implemented directly, the utilized HAMLET structure is valid, and no abnormal incidents, such as agent malfunctioning or trustability issues, occur. Additionally, the statements focus on single-algorithm query, but generalizing them to multi-algorithm queries is straightforward due to the separate processing of each algorithm specification within a query.

\begin{proposition}
	The new operator and algorithm selection and/or tuning procedures will not negatively affect HAMLET's core. 
\end{proposition}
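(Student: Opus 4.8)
The plan is to prove the proposition by establishing that the paper's modifications form a \emph{conservative extension} of HAMLET: any input that is legal in the original system produces identical behavior under the new definitions, so nothing HAMLET previously did correctly is altered. Since the proven-correct core of \cite{esmaeili2020hamlet} operates over parameter values $v_i \in \mathbb{D}_{p_i}\cup\{\ast\}$, whereas this paper enlarges the range to $v_i\in\mathbb{D}_{p_i}\cup\{\ast,?\}$, it suffices to show two things: (i) each redefined operator, restricted to the original ``$?$''-free domain, coincides exactly with its HAMLET counterpart; and (ii) the new selection/tuning procedures, when fed a ``$?$''-free query, degenerate to HAMLET's original training/testing control flow.

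First I would discharge the operator claim at the pairwise level. In the redefined PSR of Eq.~(\ref{eq:paramsimpair}) the clause returning $\tau$ is guarded by $v=?\lor v'=?$; when neither argument is ``$?$'', this clause is unreachable and the three remaining clauses (values $1$, $\alpha$, $\beta$, with $0<\beta<\alpha<1$) reproduce HAMLET's original case split verbatim. The same reasoning applies to Eq.~(\ref{eq:paramIn}): deleting ``$?$'' from the membership sets $\{v',\ast,?\}$ and $\{\ast,?\}$ recovers exactly the original coverage condition. I would then lift both operators to sets: for $\accentset{\ast}{\leq}$ over sets (Eq.~(\ref{eq:paramSetIn})) conservativity is immediate, since the set form is a universal/existential quantification over the already-agreeing pairwise operator.

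The delicate step, and the one I expect to be the main obstacle, is the set-level PSR of Eq.~(\ref{eq:paramsimset}). Its numerator splits the parameter pairs into an \emph{additive} group (matching pairs, including those tagged with ``$?$'') and a \emph{multiplicative} group (the rest), then normalizes by $|P|$. To prove conservativity I must retrieve the exact original set-level similarity from \cite{esmaeili2020hamlet} and verify, pair by pair, that once ``$?$'' is excluded the induced partition into summed and multiplied terms coincides with HAMLET's original matched/unmatched partition, and that the normalization is unchanged. This bookkeeping --- rather than any deep argument --- is where the proof can go wrong, because the mixed sum-and-product numerator is not self-evidently the same expression as the original aggregation.

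Finally I would trace the procedures. The paper stipulates that tuning is initiated only when at least one hyperparameter is paired with ``$?$''; I would formalize this by isolating the ``$?$''-guarded branches (for instance the $z_{a_i}^{id}$ test in Algorithm~\ref{alg:processselection} and the $v_i\in\{\ast,?\}$ collection step used before \textsf{LaunchTuning}) and showing that, absent any ``$?$'', these branches are never taken, so control follows the ordinary validation/training path and HAMLET's testing and training semantics are preserved. Because HAMLET already decomposes multi-algorithm queries and processes each $(a_i,P_{a_i})$ independently --- as do the new procedures --- the single-algorithm argument lifts to multi-algorithm queries without interference, completing the proof.
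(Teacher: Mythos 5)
Your first prong---operator conservativity---is the same argument the paper makes, and you actually execute it more carefully: the paper simply asserts that for non-tuning queries the redefined $\accentset{\ast}{\sim}$ and $\accentset{\ast}{\leq}$ "yield the same outcome as the original operators," whereas you correctly isolate the pairwise case split in equation~\ref{eq:paramsimpair} and equation~\ref{eq:paramIn}, and you rightly flag that the set-level aggregation in equation~\ref{eq:paramsimset} (the mixed sum-and-product numerator) is the one place where agreement with the original definition in \cite{esmaeili2020hamlet} must be checked rather than assumed. That part of your plan is sound and, if the bookkeeping goes through, strictly stronger than what the paper writes down.

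The gap is in your second prong. The paper's proof identifies \emph{two} ways the core could be violated: the operator redefinitions \emph{and} alterations to the structural relationships between agents. You replace the second with a different claim---that the new procedures, on a ``$?$''-free query, degenerate to HAMLET's original training/testing control flow. That shows the new query handlers are backward-compatible as handlers, but it does not address what happens to the hierarchy when a tuning query \emph{is} executed: Algorithm~\ref{alg:processtune}/\ref{alg:initiatetune} insert new tuner structures at the chosen candidate locations, and the proposition requires that these insertions not break the organization, training, and testing machinery that the core's correctness proof depends on. The paper closes this by observing that the tuner components are added as a special, distinct type of member agent that participates in no training or testing operation (and, per Section~\ref{sec:method}, the created tuning structures are ephemeral), so the existing procedures run unchanged. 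Without some version of that argument, your proof only establishes that the core is safe so long as nobody ever sends a tuning query, which is not the proposition being claimed.
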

\begin{proof}
	There are two ways that this paper could violate the rules and core operations of HAMLET: updating the $\accentset{\ast}{\sim}$ and $\accentset{\ast}{\leq}$ operators and/or altering the structural relationships between the agents. The updates that made to the definition of the aforementioned operators are solely to support the processing of the new tuning character ``?'' and hence, for none-tuning queries, they will yield the same outcome as the original operators in \cite{esmaeili2020hamlet}. On the other hand, for both algorithm selection and tuning procedures we added the new components as an especial and different type of member agents that do are not involved in any training or testing operations. With that said, the existing training and testing procedures can be used with no single update as before.
\end{proof}  

When it comes to the formal verification of the second direction mentioned earlier, it is crucial to establish a precise definition of ``correctness'' and the specific elements we aim to verify. In this paper takes the following definition of correctness into account: our algorithms are correct if they terminate successfully and provide accurate results when they exist, while appropriately indicating their absence.

\begin{lemma}\label{lem:eq}
	If there are more than one agent whose capabilities match a single-algorithm selection/tuning query, their similarity scores are equal.
\end{lemma}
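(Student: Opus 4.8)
The plan is to show that the similarity score a matching agent advertises for a single-algorithm query $(a,P_a)$ — namely its proposal $\accentset{\ast}{\sim}(P_a,\textsf{Cap})$, computed as in equation~\ref{eq:paramsimset} at the terminal agents that hold a concrete capability set $\textsf{Cap}$ — is a function of the query $P_a$ \emph{alone}. Once the score is seen to depend only on $P_a$, equality across all matching agents is immediate. So the whole argument reduces to analysing which branches of equations~\ref{eq:paramsimpair} and \ref{eq:paramsimset} a matching agent can possibly hit.

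First I would fix the query $P_a=\{(p,v)\}$ and recall that an agent passes the filter of line~\ref{ln:processtune_filter} only if $P_a\accentset{\ast}{\leq}\textsf{Cap}$. Writing $\textsf{Cap}=\{(p,v')\}$, whose entries carry only concrete hyperparameter values (no $\ast$, no $?$), I would apply equation~\ref{eq:paramIn} entrywise. Coverage of a query entry $(p,v)$ forces exactly one of three situations: (i) $v$ concrete, where $v'\in\{v,\ast,?\}$ combined with $v'$ concrete yields $v'=v$; (ii) $v=\ast$; or (iii) $v=?$. The decisive observation is case (i): whenever the query pins a concrete value, every matching agent is \emph{forced} to carry that same value, so a ``concrete-versus-different-concrete'' pairing can never survive the filter. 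The only freedom left to a matching agent therefore lies behind the query's $\ast$ and $?$ symbols.

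Next I would evaluate equation~\ref{eq:paramsimpair} in each surviving case: a concrete match gives $1$, a $?$ entry gives $\tau$, and $v=\ast$ against concrete $v'$ triggers the exclusive-or branch and gives $\alpha$; the $\beta$ branch is never reached. Substituting into equation~\ref{eq:paramsimset}, the exact matches ($1$) and the $?$ entries ($\tau$) populate the additive part of the numerator, the $\ast$ entries ($\alpha$) populate the multiplicative part, and the denominator is the fixed quantity $\lvert P_a\rvert$. Since the counts of concrete, $?$, and $\ast$ entries are fixed by $P_a$ and not by the particular capability — the hidden concrete values behind each $\ast$ or $?$ are deliberately collapsed by the operator to the constants $\alpha$ and $\tau$ — the numerator and denominator are identical for every matching agent, and hence so is the score.

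The main obstacle I anticipate is the careful case bookkeeping around equations~\ref{eq:paramIn} and \ref{eq:paramsimpair}: one must rule out the $\beta$ branch rigorously by invoking the coverage filter, and then confirm that the residual degrees of freedom (the concrete values masked by $\ast$/$?$) genuinely do not influence the pairwise values. Once this ``the query fixes the score'' claim is established, the lemma follows, and the extension from this single-algorithm statement to multi-algorithm queries is immediate from the per-specification processing noted at the start of Section~\ref{sec:cor_disc}, since each $(a_i,P_{a_i})$ contributes its own independent score $\zeta^g_{a_i}$ through equation~\ref{eq:compet1}.
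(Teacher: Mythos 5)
Your proof is correct, but it takes a genuinely different route from the paper's. The paper argues by contradiction: it assumes two matching terminal agents $g,h$ produce unequal scores, splits equation~\ref{eq:paramsimset} into its additive and multiplicative parts, and shows that a strictly larger sum would force some query pair $(p,v)$ to lie outside $Cap_h$ (contradicting $P_{a_i}\accentset{\ast}{\leq}Cap_h$), while a strictly larger product with equal exponent totals would force $\alpha^{n}>\tau^{n}$, impossible under $0<\beta<\alpha<\tau<1$ (the paper writes $\gamma$ for $\tau$ there). You instead prove the stronger, direct statement that the score of any matching agent is a function of the query alone: the coverage filter forces every concrete query value to be matched exactly at a terminal agent, so the only pairwise outcomes that survive are $1$ (concrete match), $\tau$ (a ``?'' entry), and $\alpha$ (a ``$\ast$'' entry against a concrete capability value), the $\beta$ branch being unreachable; the score is therefore $\bigl(n_{=}+\tau\, n_{?}+\alpha^{\,n_{\ast}}\bigr)/\lvert P_a\rvert$ with all counts read off $P_a$. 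This buys you an explicit closed form for the common score and sidesteps the exponent manipulation at the end of the paper's argument, which is delicate about the sign of $a_g-a_h$; the paper's contradiction argument, in exchange, pinpoints exactly which inequality would have to fail for two matches to disagree. Both proofs hinge on the same key observation — matching kills the $\beta$ case — and both rely on terminal capabilities carrying only concrete values, an assumption you state explicitly and the paper leaves implicit.
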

\begin{proof}
	Let $(a_i, P_{a_i})$ characterize the specifications of the selection query. According to algorithm~\ref{alg:processselection}, the similarity ratios are only calculated by the terminal agents, and the non terminal ones only propagate the maximum values upward the hierarchy. Let $g\in G$ and $h\in G$ denote two terminal agents with capabilities $Cap_g$ and $Cap_h$ respectively that match the query, i.e., $P_{a_i}\accentset{\ast}{\leq}Cap_g$ and $P_{a_i}\accentset{\ast}{\leq}Cap_h$. We use proof by contradiction and assume that the similarity scores of these two agents for the selection query are not the same. This implies that the proposal score that one of the agents ($g$ in this proof) is calculating is larger than that of the other agent. We have:
	\begin{align}
		\accentset{\ast}{\sim}(P_{a_i}, Cap_g)&> \accentset{\ast}{\sim}(P_{a_i}, Cap_g)\\
		\sum_{Cap_g, P_{a_i}} + \prod_{Cap_g, P_{a_i}}&> \sum_{Cap_h, P_{a_i}} + \prod_{Cap_h, P_{a_i}} 
	\end{align}
	which implies that $\sum_{Cap_g, P_{a_i}} >  \sum_{Cap_h, P_{a_i}}$ and/or $\prod_{Cap_g, P_{a_i}} >  \prod_{Cap_h, P_{a_i}}$. The first case means:
	\begin{align}
		\exists (p,v)\in P_{a_i} :&\quad (p,v)\in Cap_g, (p,v)\not\in Cap_h\\\Rightarrow&\quad P_{a_i}\accentset{\ast}{\le}Cap_g ,  P_{a_i}\accentset{\ast}{\nleq}Cap_h
	\end{align}
	which contradicts our assumptions. In the second case, we would have $\sum_{Cap_g, P_{a_i}} =  \sum_{Cap_h, P_{a_i}}$ and: 
	\begin{equation}
		\prod_{Cap_g, P_{a_i}} >  \prod_{Cap_h, P_{a_i}}\Rightarrow \alpha^{a_g}\beta^{b_g}\gamma^{c_g}>\alpha^{a_h}\beta^{b_h}\gamma^{c_h}
	\end{equation}
	where $a_g, b_g, c_g, a_h, b_h, c_h\in\mathbb{Z}^{+}$ and $a_g+b_g+c_g=a_h+b_h+c_h$ due to using the same $P_{a_i}$ for each similarity score. According to the assumptions of this lemma, we have: 
	\begin{align}
		P_{a_i}\accentset{\ast}{\leq}Cap_g ,&  P_{a_i}\accentset{\ast}{\leq}Cap_h \\ \Rightarrow \forall(p,v)\in P_{a_i}, (p,v')\in Cap_g, (p,v'')&\in Cap_h :(p,v)\accentset{\ast}{\leq}(p,v'), (p,v)\accentset{\ast}{\leq}(p,v'')\\ \Rightarrow b_g=b_h=0\Rightarrow& a_g+c_g=a_h+c_h \\\alpha^{a_g}\gamma^{c_g}>\alpha^{a_h}\gamma^{c_h}\Rightarrow \alpha^{a_g-a_h}&>\gamma^{c_h-c_g}\Rightarrow \alpha^{a_g-a_h} > \gamma^{a_g-a_h}
	\end{align}
	which is not possible due to the definition of $0<\beta<\alpha<\gamma<1$ in equation~\ref{eq:paramsimpair}. 
\end{proof}

\begin{proposition}
	For any single-algorithm tuning query, there is exactly one agent that is going to be selected for the process.
\end{proposition}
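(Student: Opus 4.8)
The plan is to establish the two halves of ``exactly one'' separately --- existence of a selected agent and its uniqueness --- with Lemma~\ref{lem:eq} doing the decisive work in both. The central object is the locally stored candidate pointer $b^{id}_{a_i}$ together with the proposal value $r^{id}_{a_i}$ maintained by each agent in Algorithm~\ref{alg:processtune}. Since the \textsf{Propose}/\textsf{Inform} messages carry only the numeric proposals $\mathcal{R}^{id}$ upward while each $b^{id}_{a_i}$ is kept at its owner, the agent that actually launches the tuning in Algorithm~\ref{alg:initiatetune} is the unique endpoint of the pointer chain obtained by starting at the root and repeatedly following $b^{id}_{a_i}$ until reaching an agent whose pointer is $\{me\}$. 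I would therefore phrase the proposition as the claim that this chain is well defined and terminates at a single agent.

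First I would pin down the numeric landscape. By the filtering step (line~\ref{ln:processtune_filter}) every terminal agent that does not cover the query is assigned $r^{id}_{a_i}=-1$, while every terminal agent that does cover it computes $\accentset{\ast}{\sim}(P_{a_i},\textsf{Cap})$; by Lemma~\ref{lem:eq} all such matching terminals share one common value $\sigma$, and $\sigma>0$. Consequently the maximum proposal that can surface anywhere in the hierarchy is exactly $\sigma$, any subtree containing at least one matching terminal reports $\sigma$ upward, and a subtree with no match reports a value $\le 0$. Assuming at least one matching resource exists (otherwise the query has no answer and absence is reported, consistent with the stated notion of correctness), this gives existence at once: a strictly positive candidate is preserved by every comparison in lines~\ref{ln:processtune_candids_start}--\ref{ln:processtune_candids_end}, so the root finishes with a non-empty, single-valued $b^{id}_{a_i}$ and the pointer chain is nonempty.

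For uniqueness I would introduce $v^{\ast}$, the lowest common ancestor of all matching terminal agents, and show the chain ends exactly there. Below $v^{\ast}$, minimality of the lowest common ancestor guarantees that every junction has at most one child whose subtree carries a match, so that child is the strict maximum and the pointer is set to it; the chain thus descends unobstructed toward the single matching leaf in that branch. At $v^{\ast}$ itself at least two children report the same value $\sigma$, and tracing the boolean flag $f_{a_i}$ through the tie branch (the \textbf{ElseIf} $r_{a_i}=r^{id}_{a_i}$ case) shows the second equal maximum flips $b^{id}_{a_i}$ to $\{me\}$, so the chain halts at $v^{\ast}$. Above $v^{\ast}$, every ancestor sees $\sigma$ on the single child leading to $v^{\ast}$ and strictly smaller values elsewhere, so its pointer is the strict-maximum child on the path to $v^{\ast}$; following the chain from the root therefore lands on $v^{\ast}$ and nowhere else. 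The degenerate case of a unique matching terminal is subsumed, since its lowest common ancestor is itself, it has no children to create a tie, and its initialized pointer $\{me\}$ makes it the endpoint.

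The main obstacle I anticipate is proving that a tie at value $\sigma$ arises at $v^{\ast}$ and at no other agent, since uniqueness hinges entirely on this. This is where Lemma~\ref{lem:eq} is indispensable: without equal matching scores a spurious strict maximum could divert the chain, or two distinct scores could suppress the tie at the lowest common ancestor. I would combine the lemma with the minimality of $v^{\ast}$ (no proper descendant is a common ancestor of two matching terminals, hence none can see two $\sigma$-valued children) and a careful reading of how $f_{a_i}$ is toggled in lines~\ref{ln:processtune_candids_start}--\ref{ln:processtune_candids_end} to rule out both failure modes, after which termination and single-valuedness of every $b^{id}_{a_i}$ follow immediately from the structure of Algorithm~\ref{alg:processtune}.
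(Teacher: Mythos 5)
Your argument is correct and rests on the same two pillars as the paper's proof: Lemma~\ref{lem:eq} forces every matching terminal to report one common score $\sigma$, and the tie-breaking flag $f_{a_i}$ then collapses the resulting equal proposals onto a single common ancestor. The paper's version is much terser --- it takes two hypothetically selectable agents, equalizes their proposals via Lemma~\ref{lem:eq}, and observes that the flag at their common ancestor $d$ makes $d$ the unique candidate --- whereas you additionally handle existence explicitly and identify the winner as the closest common ancestor $v^{\ast}$ of all matching terminals. That extra structure buys you, essentially for free, the paper's subsequent proposition (that the selected agent is the closest common ancestor), at the cost of more bookkeeping. Two caveats. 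First, your sub-claim that below $v^{\ast}$ ``every junction has at most one child whose subtree carries a match'' is false: a proper descendant of $v^{\ast}$ cannot dominate \emph{all} matching terminals, but it may well dominate two or more and thus see a tie itself. This does not harm your conclusion, since the pointer chain from the root never descends past $v^{\ast}$, but the sentence should be dropped or weakened. Second, you read lines~\ref{ln:processtune_candids_start}--\ref{ln:processtune_candids_end} as assigning $b^{id}_{a_i}\gets\{me\}$ on the \emph{second} equal maximum; as literally written, the second tie only sets $f_{a_i}$ and repoints to the second child, and $\{me\}$ is assigned only when a third equal proposal arrives. The paper's own proof reads the pseudocode the same way you do, so this is a quirk of the algorithm's transcription rather than a flaw in your reasoning, but it is worth flagging because the ``chain halts at $v^{\ast}$'' step hinges on it.
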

\begin{proof}
	Let $g\in G$ and $h\in G$ be two agents (either terminal or non-terminal) that can be both selected to handle the query. According to lemma~\ref{lem:eq}, the proposals that these two agent will make to their superordinates will be equal. On the other hand, due to the connectedness of HAMLET's hierarchical structure, there will be a common agent $d\in G$, such that $level(d)>level(g)$ and $level(d)>level(h)$ --- we assume that the root of the hierarchy is at level 0. This means that the proposals made/transferred by agents $g$ and $h$ will finally reach agent $d$. No matter in what order the proposals reach agent $d$, the flag variable $f_{a_i}$ in algorithm~\ref{alg:processselection} will set to $True$ and causes agent $d$ to be finally selected as the candidate handling the query. In other words, there will be always a single agent chosen regardless of the number of initial matched agents.
\end{proof}

\begin{lemma}
	Assuming a valid HAMLET architecture and providing a valid query, the proposed method for algorithm selection and tuning is correct. 
\end{lemma}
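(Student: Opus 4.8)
The plan is to verify the three clauses of correctness fixed above---successful termination, accurate results whenever a match exists, and faithful indication of absence---by structural induction on the HAMLET hierarchy, treating the first and second passes separately and invoking the results already established. Since the statement concerns a single-algorithm query, the argument proceeds per algorithm specification and lifts to multi-algorithm queries by the independence of their processing.

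First I would dispatch termination. A valid HAMLET architecture is a finite holonic hierarchy with $|G|$ agents, finite branching, and finite height $h=\tfrac{|G|-1}{2}$; every loop in Algorithms~\ref{alg:processtune},~\ref{alg:initiatetune},~\ref{alg:processselection}, and~\ref{alg:selectionsecond} ranges over a finite set (the subordinate set $\textsf{Children}$ or the finite specification $\mathcal{Q}^{id}$). Hence the first pass visits each agent a bounded number of times and the second pass touches only the finitely many candidates fixed during the first pass. Under the standing assumption that the ML routines invoked through $\textsf{LaunchTuning}$, $\textsf{Validate}$, and $\textsf{Select}$ terminate, both passes terminate.

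Next I would prove accuracy when a match exists, inducting on the height of the subtree rooted at an agent. The structural fact I would rely on is the HAMLET invariant that every non-terminal agent's capability set covers the union of its subordinates' capabilities; together with the redefined operator $\accentset{\ast}{\leq}$ and the $\textsf{Level}=1$ clause, this guarantees that the first-pass filter at line~\ref{ln:processtune_filter} never prunes a sub-query whose subtree still contains a matching terminal agent---that is, filtering is \emph{safe}. With safety in hand, the base case is immediate: a matching terminal agent computes its proposal through $\accentset{\ast}{\sim}(P_{a_i},\textsf{Cap})$. For the inductive step, Lemma~\ref{lem:eq} forces all matching agents to report equal scores, the preceding proposition yields a single well-defined candidate, and the uniqueness flag $f_{a_i}$ (line~\ref{ln:processtune_flags}) promotes the common ancestor whenever a tie survives. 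For the second pass I would induct upward to show the global minimizer is recovered: each agent forwards $\argmin_{a_i}\{s_{a_i}^{id}[\mathcal{L}]\}$, so the root obtains the true best; because $\textsf{LaunchTuning}$ realizes the optimization of equation~\ref{eq:HPTCV} and $\textsf{Validate}$ evaluates the $k$-fold loss, the returned pair satisfies equation~\ref{eq:CASH}.

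Then I would settle absence by tracing the sentinel values. When an agent finds $\mathcal{Q}^{id}=\emptyset$ it proposes a zero vector $\mathcal{R}^{id}$, and whenever the filter fails it sets $r^{id}_{a_i}\gets-1$ and $b^{id}_{a_i}\gets\emptyset$; the closing block of Algorithm~\ref{alg:processselection} (through line~\ref{ln:processselection-end}) further resets generic, non-tuning sub-queries that found no genuine match. Invoking the safety of filtering established above, if no terminal agent matches then no positive proposal is ever generated, the empty-candidate and negative-proposal sentinels propagate to the root, and the system reports absence rather than a spurious result, which discharges the final clause. The hard part will be the completeness argument for \emph{selection}, namely reconciling the several \textbf{ElseIf} branches of Algorithm~\ref{alg:processselection}---the split between tuning sub-queries flagged by $z_{a_i}^{id}$ (line~\ref{ln:processselection-ist}), generic sub-queries flagged by $l_{a_i}^{id}$ (line~\ref{ln:processselection-isg}), and the candidate-union step at line~\ref{ln:processselection-union}---with the informal requirement that selection cover \emph{all} matched capabilities while staying as close as possible to the terminal agents. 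Showing that the union branch together with the final reset block yields exactly the set of locations that jointly cover every matching resource, neither more nor fewer, is the delicate invariant I expect to carry through the induction with the most care.
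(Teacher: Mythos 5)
Your proposal is correct and follows essentially the same route as the paper's proof: both decompose correctness into partial correctness (a matching terminal agent cannot be missed in the first pass, because every child receives a CFP and, by Lemma~\ref{lem:eq}, no sibling branch can strictly outbid a matching one) and termination (finite loops in the first pass, plus a downward-only delegation chain governed by the evolution of $b_{a_i}^{id}$ in the second). The only notable differences are presentational: you frame the first-pass argument as structural induction with an explicit filter-safety invariant where the paper argues by contradiction at a hypothetical ancestor, and the "delicate invariant" you flag at the end about the candidate-union branch is material the paper defers to its subsequent propositions rather than handling inside this lemma.
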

\begin{proof}
	As for the first part of this proof, we show that if there is a terminal agent that can be used for the query, it will be found through the first pass of the selection/tuning algorithm. Let $g\in G$ be the terminal agent that matches query $(a_i, P_{a_i})$. This implies that $(name, g_{name})\accentset{\ast}{\leq}(name, a_i)$ and $P_{a_i}\accentset{\ast}{\leq}Cap_{g}$. The only reason for agent $g$ to not be found is that at some ancestor agent $d$, the query is not redirected to the branch in which agent $g$ resides. There are potential two reasons for this: 1) agent $d$ does not call for proposals from all of its subordinates, which based on line~\ref{ln:processselection-begining} of algorithm~\ref{alg:processselection} is not possible given that the algorithm is implemented correctly. 2) agent $d$ receives a larger proposal value from another subordinate branch, which based on lemma~\ref{lem:eq} is not possible given that agent $g$ is one of the matches. This proves the partial correctness of the proposed method.
	
	As for the second part towards the total correctness of the method, we show that the proposed selection/tuning process will terminate. Due to using finite sets in each of the \emph{for} loops in  algorithms~\ref{alg:processtune} and \ref{alg:processselection}, the first pass will terminate; as it was discussed above and emphasizing line~\ref{ln:processtune_filter} of algorithm~\ref{alg:processtune}, it will identify the matching agent if there is one, and report otherwise. The second pass is pursued only if there is a potential match for the query in the hierarchy. Based on the recursive nature of this pass, the only potential reason that might cause it not to terminate is the existence of a chain of recursive \emph{Ask} requests (see line~\ref{ln:selectionsecond-select} of algorithm~\ref{alg:selectionsecond}) that never ends in the base condition of line~\ref{ln:selectionsecond-base} in algorithm~\ref{alg:selectionsecond}. Based on the way that the values of variable $b_{a_i}^{id}$ changes during the first pass, we claim that this case will not rise. According to line~\ref{ln:processtune_initbests} of algorithm~\ref{alg:processtune}, this value initially satisfies the base condition, i.e., $b_{a_i}^{id}=me$, and the only times that its value changes are when: 1) the agent, either terminal or non-terminal, does not match the query, 2) the agent is non-terminal and based on a unique maximum proposal that it receives from the subordinates, it sets $b_{a_i}^{id}=g\in G, g\ne me$; and 3) the agent is non-terminal and because of not finding a proper match for a general selection query, it sets $b_{a_i}^{id}=\emptyset$ in line~\ref{ln:processselection-end}. In cases 1 and 3, the as the proposals are the minimum possible values, they will not be selected as the candidates in the upward flow of the first pass, and hence, they will never be examined during the second pass. As with the second case, the agent has some of its subordinates as candidates, and therefore, delegating the query to them, this process continues until a terminal agent for which, $b_{a_i}^{id}=me$, is reached. 
\end{proof}

Apart from correctness of the algorithms, we also strive to guarantee that the selected agent responsible for selection and/or tuning tasks possesses the closest subordinate access to all the necessary algorithmic resources. The following propositions pertain to this property and the abstract structure in figure~\ref{fig:proofexample} is used to help understanding the discussions. In this figure, we assume that agent $g\in G$ is the final agent selected to handle a given selection/tuning query, and set $H$ holds all the terminal agents that match the query. 

\begin{figure}
	\centering
	\includegraphics[width=.5\textwidth]{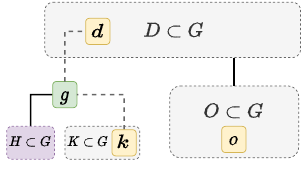}
	\caption{An abstract selection/tuning example in HAMLET. Boxes with solid borders and lower case captions represent individual agents and the ones with dashed border and capital case captions represent subsets of agents in HAMLET. Similarly, direct and indirect inter-agent connections are shown using solid and dashed lines respectively.}
	\label{fig:proofexample}
\end{figure}

\begin{proposition}
	The agent that is finally selected to manage a selection/tuning task is the closest common ancestor of all terminal agents that match the task's query.
\end{proposition}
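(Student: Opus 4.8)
The plan is to prove the statement by two complementary inclusions: first that the finally selected agent $g$ is a common ancestor of every terminal agent in the match set $H$, and second that it is the \emph{closest} such ancestor, meaning no proper descendant of $g$ is a common ancestor of all of $H$. As in the preceding results I would restrict attention to a single-algorithm query $(a_i, P_{a_i})$, the multi-algorithm case following from the separate processing of each specification. The object I would track throughout is the ``best candidate'' pointer $b_{a_i}^{id}$ maintained during the first pass (algorithms~\ref{alg:processtune} and~\ref{alg:processselection}): the agent managing the task is exactly the endpoint of the candidacy chain obtained by starting at the top of the algorithmic hierarchy and following $b_{a_i}^{id}$ downward until reaching an agent that points to itself for a tuning query, or the deepest agent whose pointer retained more than one child through the union on line~\ref{ln:processselection-union} for a selection query. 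I would first dispatch the trivial case $|H|=1$: the unique matching terminal agent keeps its initialized value $b_{a_i}^{id}=\{me\}$ from line~\ref{ln:processtune_initbests}, its proposal is the strict maximum along every ancestor branch, so the chain ends precisely at it, and it is vacuously its own closest common ancestor.

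For the common-ancestor part, I would assume $|H|\ge 2$ and suppose toward a contradiction that some $h\in H$ lies outside the subtree rooted at $g$. Let $d$ be the closest common ancestor of $g$ and $h$, so that $d$ is a proper ancestor of $g$. By Lemma~\ref{lem:eq} every member of $H$ produces the same similarity score, and since non-matching branches are filtered out on line~\ref{ln:processtune_filter}, this common value is the maximum proposal circulating for the query; hence both the branch of $d$ toward $g$ and the branch toward $h$ deliver this maximal value to $d$. Receiving the maximum from two distinct children triggers the tie-handling governed by the flag $f_{a_i}$ (line~\ref{ln:processtune_flags} and the corresponding branch of algorithm~\ref{alg:processselection}), so $d$ either sets $b_{a_i}^{id}=\{me\}$ or, for a selection query, aggregates the two branches via the union on line~\ref{ln:processselection-union}. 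In either case the candidacy chain terminates or aggregates at $d$ rather than continuing to $g$, so $d$—not $g$—manages the task, contradicting the hypothesis that $g$ is finally selected. Hence no matching terminal agent lies outside the subtree of $g$, and $g$ is a common ancestor of all of $H$.

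For the closeness part, I would argue that $g$ separates $H$ across at least two of its child subtrees. Being the endpoint of the candidacy chain, $g$ must have executed the branch that collapses a tie into $b_{a_i}^{id}=\{me\}$ (for tuning) or enlarged its pointer by the union on line~\ref{ln:processselection-union} (for selection); by the same flag mechanism this occurs only after $g$ received the maximal proposal from two distinct children. By Lemma~\ref{lem:eq} and the filtering argument above, a child subtree can carry this maximal value only if it contains a member of $H$. Therefore $H$ meets at least two distinct child subtrees of $g$, so every proper descendant of $g$ is confined to a single one of these subtrees and cannot be an ancestor of all of $H$. Combining the two parts yields that $g$ is a common ancestor with no nearer common ancestor below it, i.e.\ the closest common ancestor.

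The hard part will be the bookkeeping that links three notions the algorithms express only operationally: the ``finally selected agent,'' the endpoint of the $b_{a_i}^{id}$ chain, and the merge point of the matching branches. Making this rigorous requires justifying that a matching branch always propagates the \emph{global} maximum proposal, so that ties are resolved exactly at branch merges and never spuriously elsewhere; this leans on Lemma~\ref{lem:eq} together with a short argument that every non-matching branch contributes the strictly smaller filtered value of line~\ref{ln:processtune_filter}. A secondary subtlety I would address explicitly is the uniform treatment of the tuning and selection variants, since the former collapses a tie into $b_{a_i}^{id}=\{me\}$ while the latter retains a union of children; I would unify them through the single characterization ``deepest agent receiving the maximal proposal from two or more children,'' which coincides with the closest common ancestor in both cases.
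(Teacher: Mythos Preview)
Your two-part decomposition is sound and in fact more complete than the paper's own argument. The paper proceeds by a single contradiction: assuming some agent $d$ strictly above the closest common ancestor is the one selected, it observes via the flag set $\mathcal{F}^{id}$ and Lemma~\ref{lem:eq} that $d$ must have received the maximal proposal from an additional subordinate branch, which would house a matching terminal $o\notin H$ and contradict the definition of $H$. That rules out selection of any strict ancestor of the closest common ancestor, but the paper never verifies that the selected agent is a common ancestor of $H$ at all---precisely the direction you supply in your first part. Your second part recovers the paper's ``not too high'' bound by the complementary route (showing $H$ actually splits across two children of $g$, rather than showing no higher $d$ can be chosen), which is arguably cleaner and also covers the selection variant explicitly. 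One small clarification worth adding in your first part: the reason the branch of $d$ leading toward $g$ also carries the maximal proposal is not merely that the matching score is the global maximum, but that the candidacy chain could not have descended to $g$ unless every ancestor of $g$---and in particular $d$---routed the maximum along that branch; this is the step that forces the tie at $d$ and hence the contradiction.
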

\begin{proof}
	Assume that $H$ holds all the terminal agents that match the query, and there is an agent, denoted by $d\in D$ at some higher level of the hierarchy selected for the exact same terminal agents $H$ ($d$ will be the farther common ancestor compared to $g$). According to the way that internal variable set $F^id$ is used in algorithm~\ref{alg:processselection} and lemma~\ref{lem:eq} this requires that agent $d$ has at least one other subordinate that proposes the exact same quantity in the CFP process. The existence of such proposal is only feasible if there is another terminal agent $o\in O$ that matches the query. However, this contradicts our original assumption that agent $H$ represents all terminal agents matching the query. 
\end{proof}

\begin{proposition}
	The agent that is finally selected to manage a selection task has access to all required algorithmic resources through its subordinates.
\end{proposition}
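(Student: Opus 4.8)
The plan is to leverage the preceding proposition, which already establishes that the selected agent $g$ is the closest common ancestor of the set $H$ of all terminal agents matching the query; equivalently, every agent in $H$ lies in the subtree rooted at $g$. What remains is to upgrade this purely structural containment into an operational guarantee: that the second-pass \emph{Select} delegation of algorithm~\ref{alg:selectionsecond} actually reaches every member of $H$ through $g$'s chain of subordinates. I would therefore argue by induction along the path from $g$ down to each matching terminal, showing that no such path is pruned when the first pass computes the candidate sets.

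First I would fix an arbitrary matching terminal $t\in H$ and consider the unique path $g=d_0,d_1,\dots,d_m=t$ in the hierarchy. By lemma~\ref{lem:eq} every matching terminal computes the same positive similarity score, whereas an agent whose capabilities fail the coverage check records the minimal value $-1$; since non-terminal agents only propagate the maximum received proposal upward, each branch containing a matching terminal carries a proposal equal to the global maximum $r^{id}_{a_i}>0$, and every branch with no match carries $-1<0$. Thus the maximum is attained precisely on the matching branches, and in particular the subordinate $d_{j+1}$ on the path to $t$ always transmits the value $r^{id}_{a_i}$ to $d_j$.

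The decisive step is then to show that $d_{j+1}$ is never discarded from $b^{id}_{a_i}$ at $d_j$. For a non-tuning selection sub-query the flag $z_{a_i}^{id}$ is \emph{false}, so a tie at the positive maximum falls through to the branch of lines~\ref{ln:processselection-positive}--\ref{ln:processselection-union} of algorithm~\ref{alg:processselection}, where the \emph{union} assignment $b^{id}_{a_i}\gets b^{id}_{a_i}\cup\{c\}$ retains \emph{all} children achieving $r^{id}_{a_i}$ rather than collapsing to one. With the candidate sets so established, I would trace the second pass: starting at $g$, the \emph{Select} requests in line~\ref{ln:selectionsecond-select} of algorithm~\ref{alg:selectionsecond} fan out to every subordinate retained in $b^{id}_{a_i}$, and by the inductive hypothesis each forwards the request down its own retained branches until the base case $b^{id}_{a_i}=me$ of line~\ref{ln:selectionsecond-base} is reached at a matching terminal, which is then asked to validate. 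Since the path to $t$ survives at every level and $t\in H$ was arbitrary, $g$ reaches all matching resources, which is exactly the claimed access through its subordinates.

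The main obstacle I anticipate is the careful bookkeeping needed to confirm that the union rule fires along every matching branch and prunes none of them. Concretely, I would have to rule out the competing tie-handling cases in algorithm~\ref{alg:processselection}: the self-selection branch, which applies only to tuning sub-queries ($z_{a_i}^{id}$ true), and the genericity-controlled branches governed by $l_{a_i}^{id}$ whose final cleanup ends at line~\ref{ln:processselection-end} and only drops branches that carry no match (proposal $0$ demoted to $-1$). Verifying that none of these can strip a subordinate leading to a matching terminal---so that the retained set $b^{id}_{a_i}$ is exactly the family of branches leading into $H$---is the crux that converts the structural ``common ancestor'' fact into genuine reachability.
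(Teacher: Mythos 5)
Your proof is correct and follows essentially the same route as the paper's: both arguments rest on lemma~\ref{lem:eq} (all matching terminals yield equal similarity scores) together with the positivity of that score, so that the tie condition of line~\ref{ln:processselection-eq} and the $r_{a_i}>0$ test of line~\ref{ln:processselection-positive} force the union assignment of line~\ref{ln:processselection-union} to retain every matching branch. The paper phrases this as a contradiction (a missed resource would make the union line unreachable), whereas you argue it constructively with an induction down each path and an explicit trace of the second pass, which is a more careful write-up of the same idea rather than a different approach.
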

\begin{proof}
	An algorithmic resource with hyperparameters $P_{a'_i}$ is required for a selection task if its parameter values fall within the scope of values specified by query $(a_i, P_{a_i})$. Assume terminal agent $k\in K$ among the direct/indirect subordinates of agent $g$ that represent such an algorithmic resource, i.e., $Cap_k=P_{a'_i}$. For this agent's capability to be within the scopes of the query, we need $P_{a_i}\accentset{\ast}{\leq}Cap_k$, and hence, $\accentset{\ast}{\sim}(P_{a_i}, Cap_k)>0$. On the other hand, we know from previous discussions and based on lemma~\ref{lem:eq} we must have $\forall h\in H, \accentset{\ast}{\sim}(P_{a_i}, Cap_k)=\accentset{\ast}{\sim}(P_{a_i}, Cap_k)$. This means that line~\ref{ln:processselection-union} of algorithm~\ref{alg:processselection} is not reachable, which will not happen given that the algorithm is implemented correctly --- the fact that $\forall h\in H$, the expression $\accentset{\ast}{\sim}(P_{a_i}, Cap_k)=\accentset{\ast}{\sim}(P_{a_i}, Cap_k)$ satisfies the condition in line~\ref{ln:processselection-eq} and $\accentset{\ast}{\sim}(P_{a_i}, Cap_k)>0$ satisfies the condition of line~\ref{ln:processselection-positive} in the proposed algorithm.
\end{proof}

\subsection{Empirical Results}\label{sec:empirical}
This section showcases the application of the proposed model on a HAMLET-based distributed ML system composed of 24 ML algorithms (8 algorithms for each of classification, regression, and clustering task types) and 9 datasets (6 classification/clustering data and 3 regression data). For consistency, we have employed the same resource configurations reported in \cite{esmaeili2020hamlet}. Tables \ref{tbl:alg-details} and \ref{tbl:data-details} provides the details of utilized ML algorithms and datasets. The only difference we have applied to the datasets is replacing the \emph{Boston} house prices dataset \cite{harrison1978hedonic} with the \emph{California} house prices data \cite{KELLEYPACE1997291}.

\newcommand{\ra}[1]{\renewcommand{\arraystretch}{#1}}
\begin{sidewaystable}\centering
	\caption{The details of used datasets in the experiment \cite{esmaeili2020hamlet}.}
	\label{tbl:data-details}
	\ra{1.3}
			\begin{tabular}{@{}llcccccc@{}}\toprule
				&\emph{name} & \emph{classes/targets} & \emph{samples per class} & \emph{total samples} & \emph{dimensionality} & \emph{features}\\ \midrule
				\multicolumn{2}{l}{\textbf{classification/clustering:}}\\
				&\textit{Iris} \cite{fisher1936use} & 3 & [50,50,50] & 150 & 4 & real, positive\\
				&\textit{Wine} \cite{lichman2013uci}& 3& [59,71,48]& 178& 13& real, positive\\
				&\textit{Breast cancer} \cite{wolberg1994machine}& 2& [212, 358]& 569& 30& real, positive\\
				&\textit{Digits} \cite{alpaydin1998cascading}\tnote{1}& 10& about 180& 1797& 64& integers [0, 16]\\
				&\textit{Art. Class.}\tnote{2}& 3& [300,300,300]& 900& 20& real (-7.3, 8.9)\tnote{*}\\
				&\textit{Art. Moon}\tnote{3}& 2&[250,250]& 500& 2& real (-1.2, 2.2)\tnote{*}\\
				\multicolumn{2}{l}{\textbf{regression:}}\\
				&\textit{California house prices} \cite{KELLEYPACE1997291}& real [0.15, 5] & --& 20640& 8& real, positive\\
				&\textit{Diabetes} \cite{efron2004least}& integer [25, 346]& --& 442& 10& real (-0.2, 0.2)\\
				&\textit{Art. Regr.}\tnote{4}& real (-488.1, 533.2)& --& 200& 20& real (-4,4)\\
				\bottomrule
			\end{tabular}
				\footnotetext[*]{The values are normalized into [0,1] to prevent potential issues pertaining to negative values in some ML algorithms.}
				\footnotetext[1]{This is a copy of the test set of the UCI ML hand-written digits datasets.}
				\footnotetext[2]{Artificially made using \texttt{make\_classification} function of scikit-learn library \cite{scikit-learn-web}.}
				\footnotetext[3]{Artificially made using \texttt{make\_moons} function of scikit-learn library \cite{scikit-learn-web}.}
				\footnotetext[4]{Artificially made using \texttt{make\_regression} function of scikit-learn library \cite{scikit-learn-web}.}
\end{sidewaystable}

\begin{sidewaystable}\centering
	\caption{The details of used ML algorithms \cite{esmaeili2020hamlet}.}
	\label{tbl:alg-details}
			\begin{tabular}{@{}rrrrcrrrcrrr@{}}\toprule
				& \multicolumn{3}{c}{\emph{classification}} & \phantom{abc}& \multicolumn{3}{c}{\emph{regression}} &
				\phantom{abc} & \multicolumn{3}{c}{\emph{clustering}}\\
				\cmidrule{2-4} \cmidrule{6-8} \cmidrule{10-12}
				& \emph{id} & \emph{name} & \emph{parameters} && \emph{id} & \emph{name} & \emph{parameters} && \emph{id} & \emph{name} & \emph{parameters}\\ \midrule
				& \textbf{A01} & SVC\tnote{1} & kernel=linear && \textbf{A09} & Linear\tnote{6} & defaults && \textbf{A17} & KM\tnote{12} & defaults\tnote{*}\\
				& \textbf{A02}& SVC& kernel=sigmoid&& \textbf{A10}& Ridge\tnote{7}& fit\_inercept=False&& \textbf{A18}& KM& algorithm=full\tnote{*}\\
				& \textbf{A03}& SVC& $\gamma=0.001$&& \textbf{A11}& Ridge& $\alpha=0.5$&& \textbf{A19}& MBKM\tnote{13}& defaults\tnote{*}\\
				& \textbf{A04}& SVC& $C=100,\gamma=0.001$&& \textbf{A12}& KRR\tnote{8}& defaults&& \textbf{A20}& DBSCAN\tnote{14}& defaults\\
				& \textbf{A05}& NuSVC\tnote{2}& defaults&& \textbf{A13}& Lasso\tnote{9}& $\alpha=0.1$&& \textbf{A21}& DBSCAN& metric=cityblock\\
				& \textbf{A06}& ComNB\tnote{3}& defaults&& \textbf{A14}& NuSVR\tnote{10}& defaults&& \textbf{A22}& DBSCAN& metric=cosine\\
				& \textbf{A07}& DTree\tnote{4}& defaults&& \textbf{A15}& NuSVR& $\nu=0.1$&& \textbf{A23}& Birch\tnote{15}& defaults\\
				& \textbf{A08}& NrCent\tnote{5}& defaults&& \textbf{A16}& ElasNet\tnote{11}& defaults&& \textbf{A24}& HAC\tnote{16}& defaults\tnote{*}\\
				\bottomrule
			\end{tabular}
				\footnotetext[1]{\textbf{C-Support Vector Classification}\cite{chang2011libsvm}. Defaults:(C=1.0, kernel='rbf', degree=3, $\gamma$='scale', coef0=0.0, shrinking=True, probability=False, tol=0.001, cache\_size=200, class\_weight=None, verbose=False, max\_iter=-1, decision\_function\_shape='ovr', break\_ties=False)\cite{scikit-learn-web}.}
				\footnotetext[2]{\textbf{Nu-Support Vector Classification}\cite{chang2011libsvm}. Defaults:(nu=0.5, kernel='rbf', degree=3, $\gamma$='scale', coef0=0.0, shrinking=True, probability=False, tol=0.001, cache\_size=200, class\_weight=None, verbose=False, max\_iter=-1, decision\_function\_shape='ovr', break\_ties=False)\cite{scikit-learn-web}.}
				\footnotetext[3]{\textbf{Complement Naive Bayes classifier}\cite{rennie2003tackling}. Defaults:($\alpha$=1.0, fit\_prior=True, class\_prior=None, norm=False)\cite{scikit-learn-web}.}
				\footnotetext[4]{\textbf{Decision Tree Classifier}\cite{hastie2009elements}. Defaults:(criterion='gini', splitter='best', max\_depth=None, min\_samples\_split=2, min\_samples\_leaf=1, min\_weight\_fraction\_leaf=0.0, max\_features=None, random\_state=None, max\_leaf\_nodes=None, min\_impurity\_decrease=0.0, min\_impurity\_split=None, class\_weight=None, presort='deprecated', ccp\_alpha=0.0)\cite{scikit-learn-web}.}
				\footnotetext[5]{\textbf{Nearest Centroid Classifier}\cite{tibshirani2002diagnosis}. Defaults:(metric='euclidean', shrink\_threshold=None)\cite{scikit-learn-web}.}
				\footnotetext[6]{\textbf{Ordinary Least Squares Linear Regression}. Defaults:(fit\_intercept=True, normalize=False, copy\_X=True, n\_jobs=None)\cite{scikit-learn-web}.}
				\footnotetext[7]{\textbf{Ridge Regression}\cite{hoerl1970ridge}. Defaults:($\alpha=1.0$, fit\_intercept=True, normalize=False, copy\_X=True, max\_iter=None, tol=0.001, solver=auto)\cite{scikit-learn-web}.}
				\footnotetext[8]{\textbf{Kernel Ridge Regression}\cite{murphy2012machine}. Defaults: ($\alpha$=1.0, kernel='linear', $\gamma$=None, degree=3, coef0=1, kernel\_params=None)\cite{scikit-learn-web}.}
				\footnotetext[9]{\textbf{Least Absolute Shrinkage and Selection Operator}\cite{tibshirani1996regression}. Defaults:($\alpha$=1.0, fit\_intercept=True, normalize=False, precompute=False, copy\_X=True, max\_iter=1000, tol=0.0001, warm\_start=False, positive=False, random\_state=None, selection='cyclic')\cite{scikit-learn-web}.}
				\footnotetext[10]{\textbf{Nu Support Vector Regression}\cite{chang2011libsvm}. Defaults:($\nu$=0.5, C=1.0, kernel='rbf', degree=3, $\gamma$='scale', coef0=0.0, shrinking=True, tol=0.001, cache\_size=200, verbose=False, max\_iter=-1)\cite{scikit-learn-web}.}
				\footnotetext[11]{\textbf{Elastic Net Regression}\cite{zou2005regularization}. Defaults:($\alpha$=1.0, l1\_ratio=0.5, fit\_intercept=True, normalize=False, precompute=False, max\_iter=1000, copy\_X=True, tol=0.0001, warm\_start=False, positive=False, random\_state=None, selection='cyclic')\cite{scikit-learn-web}.}
				\footnotetext[12]{\textbf{K-Means Clustering}\cite{lloyd1982least}. Defaults:(n\_clusters=8, init='k-means++', n\_init=10, max\_iter=300, tol=0.0001, precompute\_distances='deprecated', verbose=0, random\_state=None, copy\_x=True, n\_jobs='deprecated', algorithm='auto')\cite{scikit-learn-web}.}
				\footnotetext[13]{\textbf{Mini-Batch K-Means Clustering}\cite{sculley2010web}. Defaults:(n\_clusters=8, init='k-means++', max\_iter=100, batch\_size=100, verbose=0, compute\_labels=True, random\_state=None, tol=0.0, max\_no\_improvement=10, init\_size=None, n\_init=3, reassignment\_ratio=0.01)\cite{scikit-learn-web}.}
				\footnotetext[14]{\textbf{Density-Based Spatial Clustering of Applications with Noise}\cite{ester1996density}. Defaults:($\epsilon$=0.5, min\_samples=5, metric='euclidean', metric\_params=None, algorithm='auto', leaf\_size=30, p=None, n\_jobs=None)\cite{scikit-learn-web}.}
				\footnotetext[15]{\textbf{ Birch Clustering}\cite{zhang1996birch}. Defaults:(threshold=0.5, branching\_factor=50, n\_clusters=3, compute\_labels=True, copy=True)\cite{scikit-learn-web}.}
				\footnotetext[16]{\textbf{Hierarchical Agglomerative Clustering}\cite{rokach2005clustering}. Defaults:(n\_clusters=2, affinity='euclidean', memory=None, connectivity=None, compute\_full\_tree='auto', linkage='ward', distance\_threshold=None)\cite{scikit-learn-web}.}
				\footnotetext[*]{The number of clusters is set equal to the number of true classes.}
\end{sidewaystable}

To run the experiment, we added the aforementioned resources to the HAMLET and trained all ML algorithms on all corresponding datasets by simply running a the following query:
\begin{align*}
	\langle \mathcal{A}&=\left\{(\ast,\{\ast\})\right\}, \mathcal{X}=\left\{(\ast,\{(type,\text{train})\})\right\}, \\&O=\left\{type=\text{train}, measures=\{\text{acc}, \text{mse}, \text{fws}\}\right\}\rangle
\end{align*}
where \emph{acc}, \emph{mse}, and \emph{fws} represent accuracy, mean squared error, and Fowlkes-Mallows score \cite{fowlkes1983method} measures, respectively.

Table \ref{tbl:experiments} presents the queries that we sent to the system and the corresponding results obtained. We have used the identifiers from table \ref{tbl:alg-details} to distinguish the winner(s) among the matching candidate that belong the same algorithm family. The arrows in the first column of the table indicate the criteria used for selection/optimization, with $\uparrow$ denoting maximization and $\downarrow$ representing minimization. For instance, $\uparrow$(accuracy) means the corresponding selection/optimization queries was based on maximizing the classification accuracy. Moreover, the hyperparameter optimization technique used for all tuning queries was the hierarchical agent-based method in \cite{esmaeili2023agent} with the following configuration parameter: number of iterations ($\mathcal{I}=10$), budget ($b=3$), slot width ($\mathcal{E}=2^{-6}$), number of candidates ($c=2$), and scaling parameter ($\Delta=\{2,\dots,2\}$). The reported values are the performance on the validation set.

\newgeometry{margin=1cm}
	\begin{sidewaystable}[h]
		\centering
		\caption{The results obtained for algorithm selection and/or tuning queries. The bold face values correspond to the algorithms selection by the algorithm selection process.}
		\label{tbl:experiments}
			\begin{tabular}{ccccccccccc}
				\toprule
				{}& {} & {} & {} & {} & {} & \multicolumn{5}{c}{ML Algorithms} \\\cmidrule(r){7-11}
				{Task} && {ID (fig.)} && {Query Specifications}&&  {SVC} & {NuSVC} & {ComNB} &  {DTree} & {NrCent}\\
				\midrule
				\multirow[c]{5}{*}{\rotatebox[origin=c]{90}{\makecell{classification\\$\uparrow$(accuracy)}}} & {} & 1 (\ref{fig:Q1})& {} &\makecell{
					$\mathcal{A}=\{(\text{SVC},\{\text{(kernel,rbf), ($C$,?), ($\gamma$,?)}\})\}$\\
					$\mathcal{X}=\{(\text{Art. Class.},\{(\text{type, all})\})\}$
				}&&  
				\begin{tabular}{c|c}
					\textbf{0.843} & \makecell{$C=100$\\$\gamma=0.055$}
				\end{tabular} 
				& --- &  --- &  --- &  ---  \\\arrayrulecolor{lightgray}\cmidrule(r){3-11}\arrayrulecolor{black}
				& {} & 2 (\ref{fig:Q0}) & {} &\makecell{
					$\mathcal{A}=\{(\text{*},\{\text{*}\})\}$\\
					$\mathcal{X}=\{(\text{Iris},\{(\text{type, all})\})\}$
				}&&  A01,A04:\textbf{0.980} & \textcolor{darkgray}{0.967} &  \textcolor{darkgray}{0.667} &  \textcolor{darkgray}{0.967} &  \textcolor{darkgray}{0.920} \\\arrayrulecolor{lightgray}\cmidrule(r){3-11}\arrayrulecolor{black}
				& {} & 3 (\ref{fig:Q3}) & {} &\makecell{
					$\mathcal{A}=\{(\text{*},\{\text{(kernel,rbf), ($\gamma$,?)}\})\}$\\
					$\mathcal{X}=\{(\text{Wine},\{(\text{type, all})\})\}$
				}&&  
				\begin{tabular}{c|c}
					\textbf{0.782} & \makecell{$\gamma=0.001$}
				\end{tabular}
				& 
				\textcolor{darkgray}{0.708$|\gamma=0.0004$}
				&  --- & --- & --- \\\midrule\midrule
				{}& {} & {} & {} & {} & {} & {Ridge} & {KRR} & {Lasso} & {NuSVR} & {ElasNet} \\
				\midrule
				\multirow[c]{5}{*}{\rotatebox[origin=c]{90}{\makecell{regression\\$\downarrow$(mse.)}}} & {} & 4 (\ref{fig:Q4}) & {} &\makecell{
					$\mathcal{A}=\{(\text{NuSVR},\{\text{(kernel,rbf), ($C$,?), ($\gamma$,?)}\})\}$\\
					$\mathcal{X}=\{(\text{Art. Regr.},\{(\text{type, all})\})\}$
				}&&  {---} & --- & --- &  
				\begin{tabular}{c|c}
					\textbf{5.037} & \makecell{$C=1000$\\$\gamma=0.002$}
				\end{tabular}
				& ---  \\\arrayrulecolor{lightgray}\cmidrule(r){3-11}\arrayrulecolor{black}
				& {} & 5\tnote{*} (\ref{fig:Q0})& {} &\makecell{
					$\mathcal{A}=\{(\text{*},\{\text{*}\})\}$\\
					$\mathcal{X}=\{(\text{Diabetes},\{(\text{type, all})\})\}$
				}&&  A10:\textbf{27283.229} & A12:\textbf{27283.229} & \textcolor{darkgray}{3008.892} & \textcolor{darkgray}{5016.415} & \textcolor{darkgray}{5930.2} \\\arrayrulecolor{lightgray}\cmidrule(r){3-11}\arrayrulecolor{black}
				& {} & 6 (\ref{fig:Q6}) & {} &\makecell{
					$\mathcal{A}=\{(\text{*},\{\text{($\alpha$,?)}\})\}$\\
					$\mathcal{X}=\{(\text{California},\{(\text{type, all})\})\}$
				}&&  
				\begin{tabular}{c|c}
					\textbf{0.5581} & \makecell{$\alpha=9.977$}
				\end{tabular}  
				& 
				\textcolor{darkgray}{0.644$|\alpha=2.727$}
				& 
				\textcolor{darkgray}{0.585$|\alpha=0.037$} 
				& {---} &
				\textcolor{darkgray}{0.5582$|\alpha=0.0004$} 
				\\\midrule\midrule
				{}& {} & {} & {} & {} & {} & {KM} & {MBKM} & {DBSCAN} & {Birch} & {HAC}\\
				\midrule
				\multirow[c]{5}{*}{\rotatebox[origin=c]{90}{\makecell{clustering\\$\uparrow$(fms)}}} & {} & 7 (\ref{fig:Q7})& {} &\makecell{
					$\mathcal{A}=\{(\text{DBSCAN},\{\text{($\epsilon$,?), (metric,?)}\})\}$\\
					$\mathcal{X}=\{(\text{Art. Class.},\{(\text{type, all})\})\}$
				}&&  --- & --- & 
				\begin{tabular}{c|c}
					\textbf{0.576} & \makecell{$\epsilon=1.534$\\$metric=l1$}
				\end{tabular} 
				& --- & ---  \\\arrayrulecolor{lightgray}\cmidrule(r){3-11}\arrayrulecolor{black}
				& {} & 8 (\ref{fig:Q0}) & {} &\makecell{
					$\mathcal{A}=\{(\text{*},\{\text{*}\})\}$\\
					$\mathcal{X}=\{(\text{Art. Class.},\{(\text{type, all})\})\}$
				}&&  \textcolor{darkgray}{A17:0.475} & \textcolor{darkgray}{0.488} &  A20,A21:\textbf{0.575} & \textcolor{darkgray}{0.496} & \textcolor{darkgray}{0.543} \\\arrayrulecolor{lightgray}\cmidrule(r){3-11}\arrayrulecolor{black}
				& {} & 9 (\ref{fig:Q9}) & {} &\makecell{
					$\mathcal{A}=\{(\text{*},\{\text{(n\_clusters,?)}\})\}$\\
					$\mathcal{X}=\{(\text{Art. Class.},\{(\text{type, all})\})\}$
				}&& 
				\textcolor{darkgray}{0.658$|n\_clusters=3$} 
				& 
				\begin{tabular}{c|c}
					\textbf{0.659} & \makecell{$n\_clusters=3$}
				\end{tabular}
				& -- & 
				\textcolor{darkgray}{0.576$|n\_clusters=1$}
				& \textcolor{darkgray}{0.576$|n\_clusters=1$}
				\\
				\bottomrule
			\end{tabular}
				\footnotetext[*]{ Due to space restrictions and the fact that the linear regression algorithm (A09) is only involved in this particular query, its results have been omitted from the table. However, for the purpose of comparison, it is worth noting that the mean squared error (MSE) of this model is reported as 2993.073.}
	\end{sidewaystable}
\restoregeometry

The system's structure, consisting of agents representing the ML algorithms described in Table \ref{tbl:alg-details}, is automatically captured and visualized using HAMLET's VIZ utility agent. Figure \ref{fig:queries} illustrates the snapshot of the system structure, focusing specifically on the algorithm section for clarity. Figure \ref{fig:Q0} depicts the initial state of the system prior to initiating the queries mentioned in Table \ref{tbl:experiments}. This state will remain unchanged for queries 2, 5, and 8 since they only involve selecting from the available resources. It is important to note that all tuning results are assumed to be ephemeral, resulting in structural changes once a tuning process concludes. 

\begin{figure}
	\centering
	\begin{subfigure}[b]{0.45\textwidth}
		\centering
		\includegraphics[width=\textwidth]{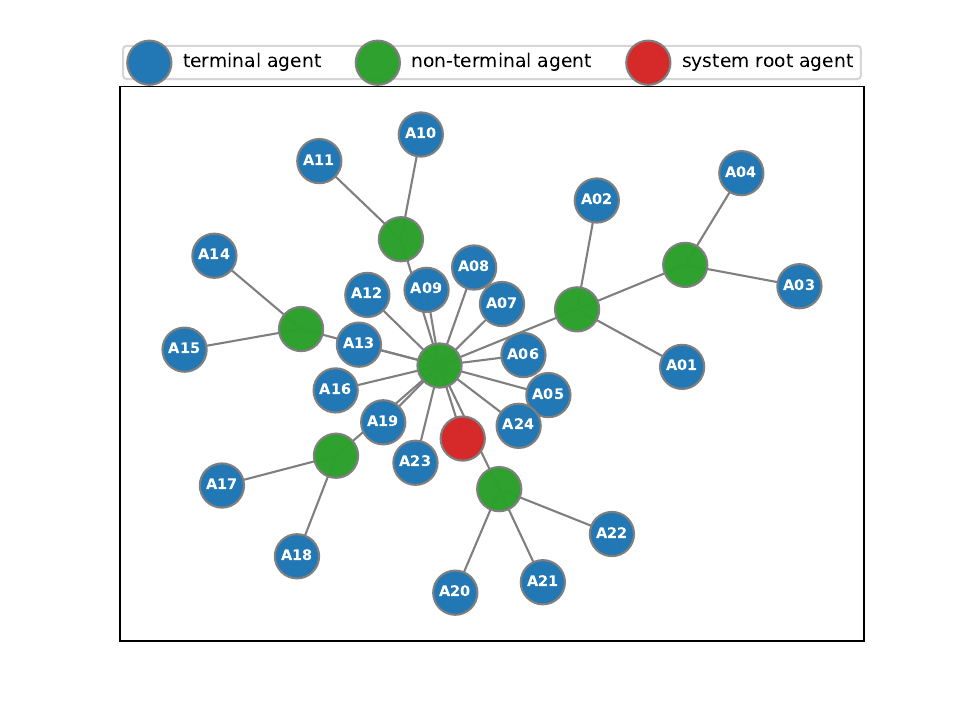}
		\vspace{-2.5em}
		\caption{Queries 2, 5, and 8}
		\label{fig:Q0}
	\end{subfigure}
	\begin{subfigure}[b]{0.45\textwidth}
		\centering
		\includegraphics[width=\textwidth]{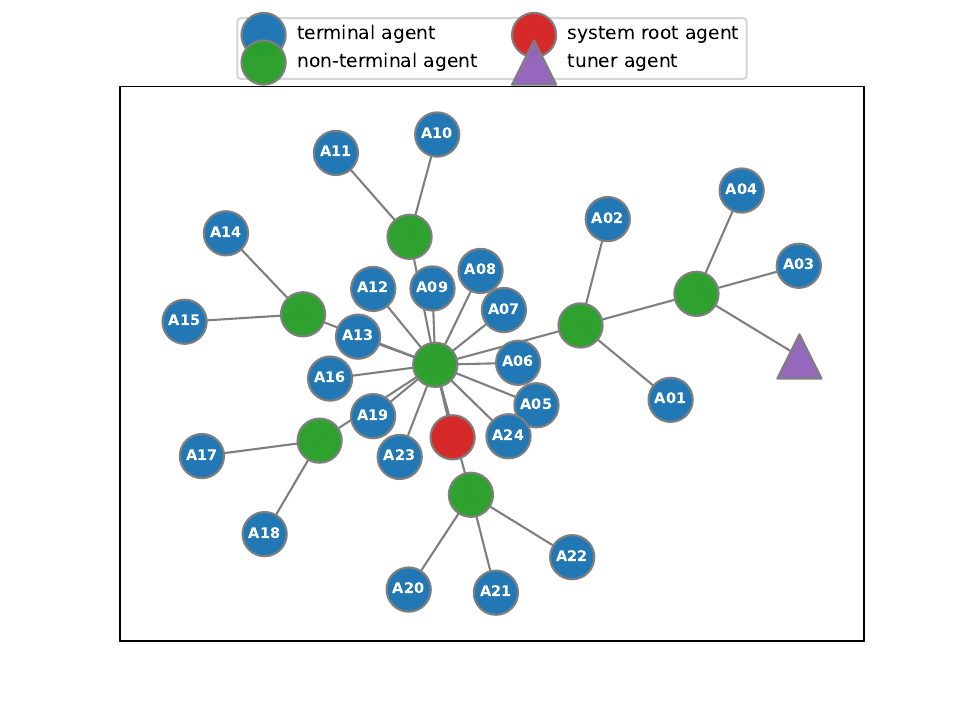}
		\vspace{-2.5em}
		\caption{Query 1}
		\label{fig:Q1}
	\end{subfigure}\\\vspace{1em}
	\begin{subfigure}[b]{0.45\textwidth}
		\centering
		\includegraphics[width=\textwidth]{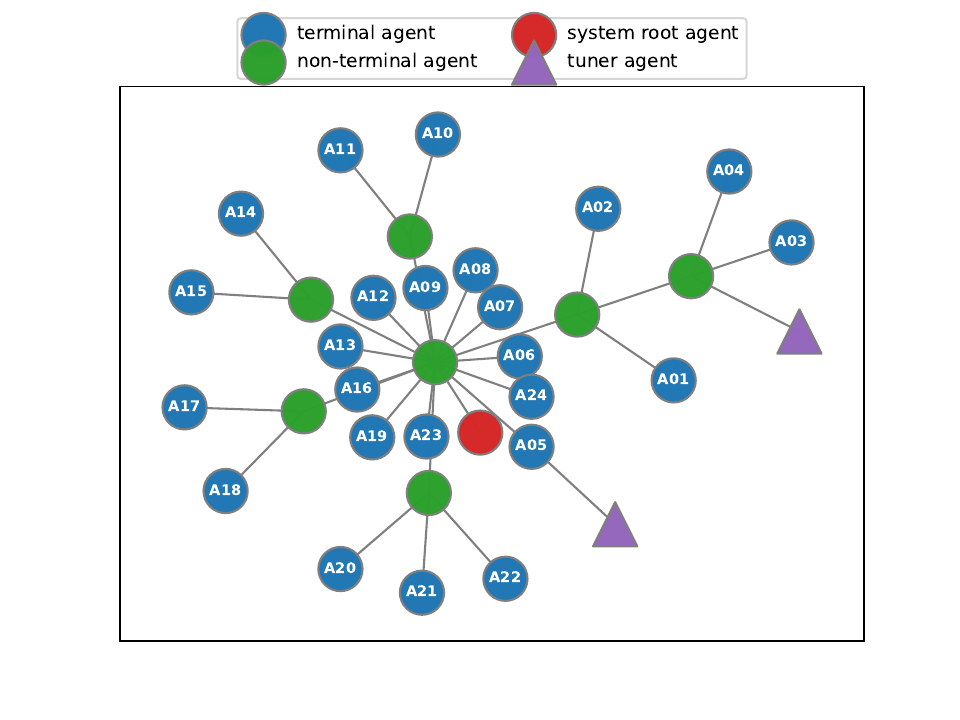}
		\vspace{-2.5em}
		\caption{Query 3}
		\label{fig:Q3}
	\end{subfigure}
	\begin{subfigure}[b]{0.45\textwidth}
		\centering
		\includegraphics[width=\textwidth]{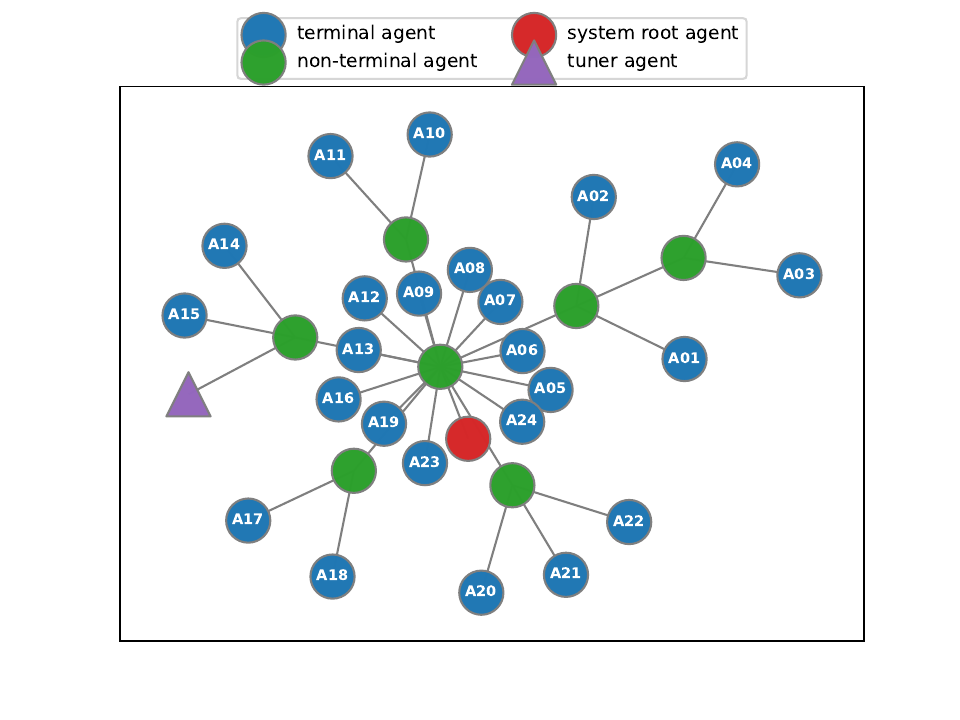}
		\vspace{-2.5em}
		\caption{Query 4}
		\label{fig:Q4}
	\end{subfigure}\\\vspace{1em}
	\begin{subfigure}[b]{0.45\textwidth}
		\centering
		\includegraphics[width=\textwidth]{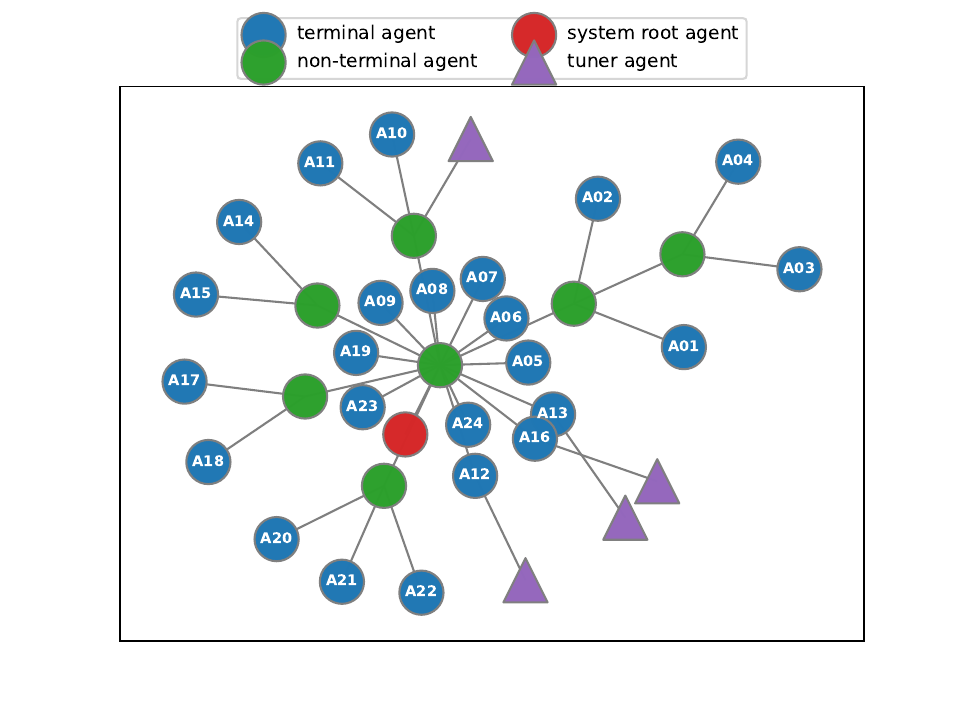}
		\vspace{-2.5em}
		\caption{Query 6}
		\label{fig:Q6}
	\end{subfigure}
	\begin{subfigure}[b]{0.45\textwidth}
		\centering
		\includegraphics[width=\textwidth]{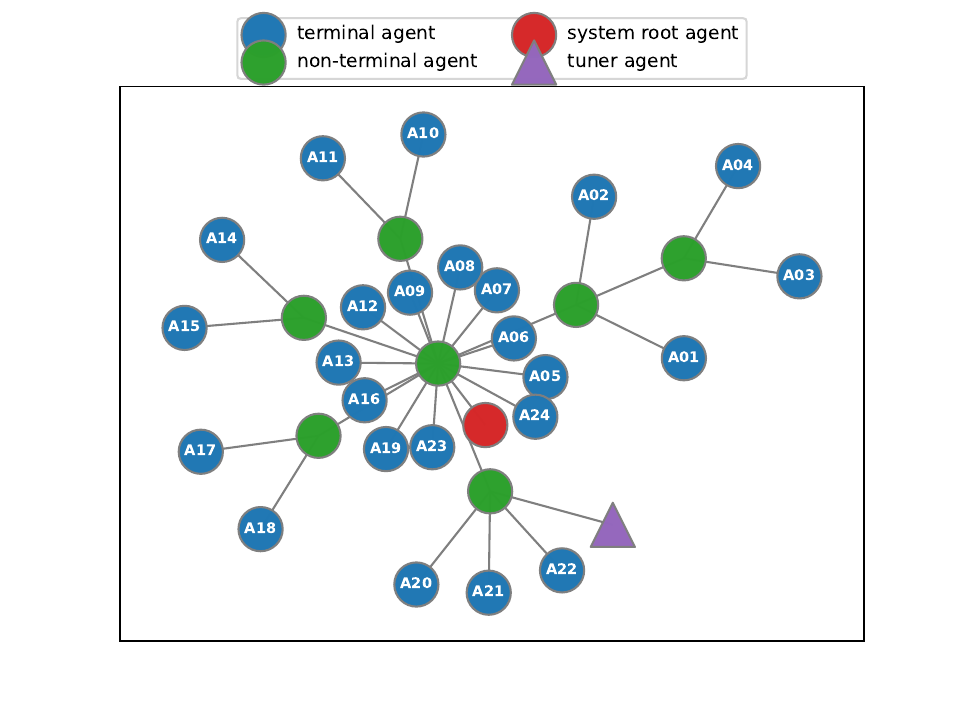}
		\vspace{-2.5em}
		\caption{Query 7}
		\label{fig:Q7}
	\end{subfigure}\\\vspace{1em}
	\begin{subfigure}[b]{0.45\textwidth}
		\centering
		\includegraphics[width=\textwidth]{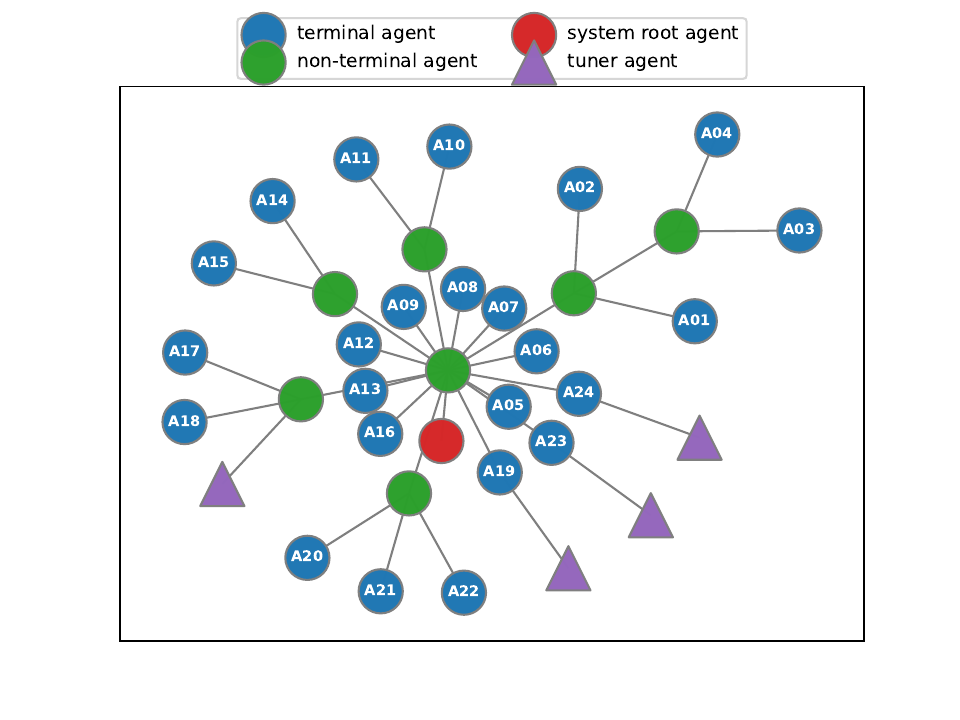}
		\vspace{-2.5em}
		\caption{Query 9}
		\label{fig:Q9}
	\end{subfigure}    
	\caption{The network structure representing the main algorithm agents and their interconnections at the end of table \ref{tbl:experiments} queries.}
	\label{fig:queries}
\end{figure}

To enhance clarity, the results presented in Table \ref{tbl:experiments} have been grouped according to the type of tasks they target. We conducted tuning, selection, and hybrid selection-tuning queries on these task groups. Furthermore, to facilitate comparison, we have included results from non-winning candidates, which are highlighted in gray. The following provides an overview of the queries, indexed by their ID:
\begin{enumerate}
	\item This query focuses on tuning the hyperparameters $c\sim logUniform(10^{-2},10^{13})$ and $\gamma\sim Uniform(0,1)$ for all available SVC algorithms. The tuning process is performed on an artificial classification dataset detailed in Table \ref{tbl:data-details}. The resulting tuned values and their corresponding accuracy are provided in the table.
	\item This is an algorithm selection query which targets all the available classification algorithms in the system trained on the iris dataset. As it can be seen, two SVC algorithm versions, namely A01 and A04, won the selection process by yielding 98\% accuracy.
	\item In this query, our objective is to perform a hybrid algorithm selection and tuning process on the Wine dataset. Specifically, we aim to  tune the $\gamma\sim Uniform(0,1)$ hyperparameter for all the algorithms that have parameter, as well as the \emph{kernel} hyperparameter with value of ``rbf'', and select the algorithm with the highest accuracy. From table \ref{tbl:alg-details}, only the SVC and NuSVC algorithms meet the criteria, which are correctly processed and reported by the proposed mechanism. 
	\item This is a tuning query targetting the $c\sim logUniform(10^{-2},10^{13})$ and $\gamma\sim Uniform(0,1)$ hyperparameters of the available NuSVR algorithms trained the artificial regression dataset in the system.
	\item  This comprehensive selection query encompasses all available regression algorithms. The results reveal that the A10 and A12 algorithms have been successfully identified and selected by the mechanism, as they both yield the same MSE value.
	\item In this query, our aim is to select the best regression algorithm for the California house prices dataset. Specifically, we target the algorithms that possess the $\alpha\sim Uniform(0,10)$ hyperparameter. The selection process will be based on the algorithms' performance after tuning. This example serves to showcase the flexibility of our query-based mechanism.
	\item In this query, we try to determine the sub-optimal values for the DBSCAN algorithm's $\epsilon\sim Uniform(0,10)$ and $metric\in\{\text{cityblock, cosine, euclidean, l1, l2, manhattan}\}$ hyperparameters . The tuning process was carried out using an artificial classification dataset generated by the dedicated data agent assigned to this task.
	\item This query conducts a thorough search across all the available clustering algorithms within the system and selects the one(s) with the highest FM-Score value. 
	\item In a similar fashion to task 6, this query focuses on tuning the $n\_clusters\in\{1,2,\dots, 10\}$ hyperparameter for all compatible clustering algorithms and selects the one that results in the highest FM-Score. Notably, the MBKM algorithm, with a tuned hyperparameter value of 3, has exhibited superior performance compared to all other tuned candidates.  
\end{enumerate}

It is worth highlighting that in all the preceding examples, we have not explicitly specified the exact location of the resources or the types of tasks. The HAMLET system, incorporating the hybrid algorithm selection and tuning mechanism described, effectively handles the exploration and retrieval of the relevant results. The system autonomously manages the process of finding and collecting the appropriate outcomes based on the specified queries and criteria.

\section{Conclusion}\label{sec:conclusion}
This paper presented a hybrid algorithm selection and hyperparameter tuning method that leverages the machine learning resources distributedly organized within the HAMLET platform. The proposed approach utilizes the same query-based technique and core operations to specify, verify, and automatically execute selection and optimization tasks. By being agent-based and built upon the HAMLET platform, our  proposed solution can be easily deployed on a heterogeneous set of devices and allows the implementation of sophisticated collaborative selection and tuning solutions. Furthermore, while our models are generic and independent of any specific ML algorithms, they can effectively utilize computationally limited devices within the HAMLET network when coupled with distributed hyperparameter tuning methodologies, such as the ones in \cite{esmaeili2022hierarchical} and \cite{esmaeili2023agent}. 

To ensure the correctness and analyze the performance of our method, regardless of the way it is implemented or the type of ML resources available on HAMLET network, we provided a series of theoretical claims and proofs and discussed both its time complexity and memory requirements. The presented formal verification proved the termination of the proposed approach with correct answers if they exist and studies how it takes the communication distance and resource accessibility into account. Additionally, the given performance analysis showed that both time and space complexities are linear in terms of the network size, which is negligible compared to the complexity of tasks like tuning and model training in most real cases.

The research opens up various directions for future work. Some suggested avenues include but are not limited to reusability of previous tuning and selection tasks by agents, handling partially or completely duplicate request, enabling horizontal inter-agent connections to improve communication efficiency, investigating and mitigating the effects of unforeseen incidents like agent failures, and incorporating mechanisms to control the presence of potential adversaries.

\backmatter








\bibliography{main}

\end{document}